\newcommand{\E}{\mathbb{E}}
\newcommand{\V}{\mathbb{V}}
\newcommand{\Pbb}{\mathbb{P}}
\newcommand{\abs}[1]{\left|#1\right|}
\theoremstyle{plain}
\newtheorem{theorem}{Theorem}
\newtheorem{proposition}[theorem]{Proposition}
\newtheorem{lemma}[theorem]{Lemma}
\newtheorem{corollary}[theorem]{Corollary}
\theoremstyle{definition}
\newtheorem{definition}{Definition}
\theoremstyle{remark}
\newtheorem{remark}[theorem]{Remark}
\newcommand{\vv}[1]{{\color{red}{#1}}}
\newcommand{\ours}{{\color{red}\scalebox{1.15}{\rotatebox[origin=c]{180}{$\bigstar$}}}}
\newcommand{\smallours}{{\color{red}\scalebox{0.7}{\rotatebox[origin=c]{180}{$\bigstar$}}}}
\title{STaR-Bets: Sequential Target-Recalculating Bets for Tighter Confidence Intervals}
\author{%
  Václav Voráček \\
  Second Foundation\thanks{work done while with University of T\"ubingen.} \\
  \texttt{vasek.voracek@gmail.com} \\
   \And
  Francesco Orabona \\
  King Abdullah University of Science and Technology \\
   \texttt{francesco@orabona.com} \\
}
\begin{document}

\maketitle

\begin{abstract}
The construction of confidence intervals for the mean of a bounded random variable is a classical problem in statistics with numerous applications in machine learning and virtually all scientific fields.
In particular, obtaining the tightest possible confidence intervals is vital every time the sampling of the random variables is expensive.
The current state-of-the-art method to construct confidence intervals is by using betting algorithms. This is a very successful approach for deriving optimal confidence sequences, even matching the rate of law of iterated logarithms. However, in the fixed horizon setting, these approaches are either sub-optimal or based on heuristic solutions with strong empirical performance but without a finite-time guarantee. Hence, no betting-based algorithm guaranteeing the optimal $\mathcal{O}(\sqrt{\frac{\sigma^2\log\frac1\delta}{n}})$ width of the confidence intervals are known.
This work bridges this gap. We propose a betting-based algorithm to compute confidence intervals that empirically outperforms the competitors. 
Our betting strategy uses the optimal strategy in every step (in a certain sense), whereas the standard betting methods choose a constant strategy in advance. Leveraging this fact results in strict improvements even for classical concentration inequalities, such as the ones of Hoeffding or Bernstein.
Moreover, we also prove that the width of our confidence intervals is optimal up to an $1+o(1)$ factor diminishing with $n$.
The code is available on~\href{https://github.com/vvoracek/STaR-bets-confidence-interval}{github}.
\end{abstract}

\section{Introduction}
\label{sec:intro}

Quantifying uncertainty is a cornerstone of statistical inference, and Confidence Intervals (CIs) remain one of the most widely used tools for this purpose. Typically constructed within the frequentist paradigm, a $(1-\delta)$-CI provides a range of plausible values for an unknown parameter, derived from observed data, with the guarantee that the procedure yields intervals covering the true parameter value in $1-\delta$ proportion of hypothetical repetitions.
In particular, in this paper we are interested in providing CI for the mean of a bounded random variable. So, more formally, after observing $n$ i.i.d. samples $X_1, \dots, X_n$ of a random variable in $[0,1]$ with mean $\mu$, we want to find $l_n$ and $u_n$ such that
\begin{equation}
\label{eq:ci}
\Pbb\{ \mu \leq u_n\}\geq 1-\delta/2 \ \text{ and } \ \Pbb\{ \mu \geq l_n\}\geq 1-\delta/2~.
\end{equation}

We aim for narrow confidence intervals, i.e., small \(u_n - l_n\). Classical methods provide well-established procedures with optimal asymptotic properties, but their performance deteriorates when $n$ is small.


This paper explores an alternative approach to constructing confidence intervals, based on the framework of game-theoretic probability and sequential betting. 
In this framework, the algorithm makes sequential bets on the values of $X_i-m$. If $m$ is close to the true mean $\mu$, then the random variables $X_i-m$ is approximately zero mean. Now, it is intuitive that no strategy can hope to gain money betting on a fair random variable and we expect little money betting on a random variable with mean very close to zero. Conversely, if the algorithm makes enough money, we can infer that $m$ is far from the true mean $\mu$.
The above reasoning can be carried out in rigorous way, using the concept of \emph{test martingales}~\citep{ShaferSVS11}. In particular, one can show that we can construct the confidence interval as the set of all values of $m$ such that the wealth does not reach $\frac{2}{\delta}$. In this view, a better betting algorithm results in a tighter confidence interval.

\vspace{-0.05cm}

These approaches give state-of-the-art theoretical and empirical results in the time-uniform case. However, they fall short in the finite-sample regime, both in theory and in practice.


\vspace{-0.05cm}

\noindent\textbf{Contributions.}
In this paper, we propose a new strategy for betting in the finite-sample regime. We explicitly take into account how many rounds we still have and that by how much we still need to multiply our current wealth to reach the threshold. This give rise to a new family of betting algorithms for the finite-sample setting, the \ours-algorithms (STaR=Sequential Target-Recalculating). We show that this strategy can be used, for example, to immediately improve the confidence intervals calculated through Hoeffding and Bernstein, \emph{without losing anything}.
Moreover, we present a new algorithm, STaR-Bets, that achieve valid state-of-the-art confidence intervals on a variety of distributions.
Additionally, we show that its variant, Bets, attains the optimal rates for confidence interval width---marking the first such result for betting-based confidence interval methods.

\vspace{-0.1cm}

\section{Related Work}
\label{sec:related}

\vspace{-0.15cm}

The literature on how to construct confidence intervals for the mean of a random variable is vast. Classic results like Hoeffding's~\citep{Hoeffding63} and Bernstein's~\citep{Bernstein1924} inequalities require the knowledge of the variance of the random variable (which can always upper bounded for bounded random variables). Later, \citet{MaurerP09} and \citet{audibert2007variance} proved that it is possible to prove \emph{empirical} Bernstein's inequalities, that is, with optimal dependency on the variance but without its prior knowledge. However, these results tend to be loose on small samples because they sacrifice the tightness of the interval with a simple and interpretable formula.

On the other hand, it is possible to design procedures to numerically find the intervals, without having a closed formula. A classic example is the Clopper-Pearson confidence interval~\citep{ClopperP34} for Bernoulli random variables, that is obtained by inverting the Cumulative Distribution Function (CDF) of the binomial distribution. This approach is general, in the sense that it can be applied to any random variable when the CDF is known. However, no optimal solution is known if we do not have prior knowledge of the CDF of the random variable.

An alternative approach is the betting one, proposed by \citet{Cover74} for the finite-sample case and by \citet{ShaferV05} and \citet{ShaferSVS11} for the time-uniform case, as a general way to perform statistical testing of hypotheses.
The first paper to consider an implementable strategy for testing through betting is \citet{Hendriks18}. Later, \citet{JunO19} proposed to design the betting algorithms by specifically minimizing the regret w.r.t. a constant betting scheme. \citet{Waudby-SmithR21} suggested the use of betting heuristics, motivated by asymptotic analyses and online convex optimization approaches to betting algorithms~\citep{OrabonaP16, CutkoskyO18}.
Recently, \citet{OrabonaJ21} have shown that regret-based betting algorithms based on universal portfolio algorithms~\citep{CoverO96} can recover the optimal law of iterated logarithm too, and achieve state-of-the-art performance for time-uniform confidence intervals. However, these approaches fail to give optimal widths in the finite-sample regime that we consider in this paper. We explicitly note that even the optimal portfolio strategy for the finite-time setting~\citep{OrdentlichC98} does not achieve the optimal widths because it suffers from an additional $\log T$ factor. This is due to the fact that minimizing the regret with respect to a fixed strategy does not seem to be right objective to derive confidence intervals in the finite-sample case.

The problem of achieving the tightest possible confidence interval for random variables for small samples has received a lot of interest in the statistical community. It is enough to point out that even the Clopper-Pearson approach has been considered to be too wasteful, because of the discrete nature of the CDF that does not allow an exact inversion.\footnote{See \citet{Voracek24}, for an optimal, randomized procedure.} In this view, sometimes people prefer to use approximate methods, that is, methods that do not guarantee the exact level of confidence $1-\delta$, because they are less conservative~\citep[see, e.g., the reviews in][]{AgrestiC98,SchillingD14}.
Recently, \citet{PhanTLM21} proposed an approach based on constructing an envelope that contains the CDF of the random variable with high probability. This approach was very promising in the small sample regime, however in Section~\ref{sec:exp} we will show that our approach is consistently better.

In this paper, we build on the ideas of Cover~\cite{Cover74} regarding the construction of optimal betting strategies for Bernoulli random variables and design a general betting scheme in that spirit.


\section{A Coin-Betting Game}
\label{sec:coin}

The statistical testing framework we rely on is based on \emph{betting}.
In particular, we will set up simple betting games based on the hypothesis we want to test. The outcome of the testing will be linked to how much money a betting algorithm makes. Hence, we will be interested in designing ``optimal'' betting strategies.
In the following, we first describe a simple betting game to draw some intuition about the optimal betting strategies. Then, in Section~\ref{sec:testing} we formally introduce the testing by betting framework.

We introduce a simple sequential coin betting game in which we bet on the outcomes of a coin toss. If the result is Heads, we gain the staked amount, otherwise we lose it.
The game is as follows: We will bet for $n$ rounds, and we win if we multiply our initial wealth by at least a factor of $k$. Hence, we are not interested in how much money we make, but only if we pass a certain threshold.

Let's now discuss some possible scenarios, to see how the optimal betting strategy changes.

\noindent\textbf{Scenario 1, $n=1$, $k=2$:} It is optimal to bet everything and if the outcome is Heads, we win. Generally, in the last round we should always bet everything we have. \newline
\textbf{Scenario 2, $n=5$, $k=0.9$:} In this case, it is optimal to not bet anything, and we win. \newline
\textbf{Scenario 3, $n=2$, $k=4/3$:} Consider the possible outcomes: Heads/Heads, Heads/Tails, Tails/Heads, Tails/Tails. In the last case we always lose.
If we want to win in the Heads/Tails case, we would need to bet $1/3$, in which case we reach the desired wealth after observing Heads and we stop betting.
In the case we first observe Tails and we end up with $2/3$ money. Then, we recover Scenario 1 and bet all the money.\newline

Before we continue with more general scenarios, we make some observations. These considerations will be useful when we will link betting strategies to the proof of standard concentration inequalities, such as Hoeffding's or Bernstein's.
\begin{itemize}
    \item  The original values of $n,k$ do not matter. Instead, what matter are how these quantities ``change over time,'' that is, how many rounds are left and how many times we still need to multiply our wealth.
    \item  When we hit the required wealth, we should stop betting.
    \item  For a given $n$, the smaller $k$ is, the less we need to bet. Conversely, if $k$ is large, we need to bet aggressively.
    \item  We should always finish the game by either going bankrupt or by hitting the target.
\end{itemize}
Already in~\cite[Example 3]{Cover74}, it was shown that if a betting strategy is optimal (for hypothesis testing, in the sense of Proposition~\ref{prop:bankrupt}), it either has to reach the target, or go bankrupt. The contrary was also proven, that there exists such an algorithm. More generally, all possible outcomes of the coin-betting game were described. Let's now consider more complex scenarios.

\noindent\textbf{Scenario 4, $n$, $k \approx 1+2^{-n}$:} In this case, our bets would be $\approx 2^{-n}$, so that we win as soon as we observe a single heads outcome.\newline
\textbf{Scenario 5, $n$, $k\approx 2^n$: } In this case, our bets would be $\approx 1$ in general, as we need to roughly double our wealth every round and hope for lots of heads. \newline
\textbf{Scenario 6, $n$, $k$: } In the general case, we bet $\approx \sqrt{\log k /n}$. In this way, if we observe $H$ heads and $T$ tails, the wealth would be roughly
\[
\left(1+\sqrt{\frac{\log k}{n}}\right)^H\left(1-\sqrt{\frac{\log k}{n}}\right)^T \approx \left(1-\frac{\log k}{n}\right)^T \left(1+\sqrt{\frac{\log k}{n}}\right)^{H-T} \gtrsim k,
\]
as long as $H-T \gtrsim \sqrt{n\log k}$.

Let us examine the event $H-T \gtrsim \sqrt{n\log k}$: If the coin is fair, the event $H-T \gtrsim \sqrt{n\log k}$---in which case we successfully multiplied our wealth by a factor $k$---happens with probability at most $\approx \frac1k$, and it is not just a coincidence. Thus, if we multiply our wealth by factor of $k$, we can rule out the possibility that the game is fair at a confidence level $1-\frac 1 k$.
This statement is the cornerstone of the testing by betting framework and we will make it formal and prove it in the next section.

\section{Testing and Confidence Intervals by Betting}
\label{sec:testing}

Before describing this technique in detail, we introduce the precise mathematical framework. We start with the definition of test processes: Sequences of random variables modeling fair\footnote{We allow for the game to be unfair against us.} betting games.
In these games, our wealth is always non-negative and stays constant (or decreases) in expectation. In the literature, test processes are also called e-processes and they are (super)-martingales.

\begin{definition}[Test process]
    Let $W_0, W_1, \dots W_n$ be a sequence of non-negative random variables. We call it a test process if  $\E[W_0] = 1$ and $\E[W_{i+1}\mid W_0, \dots, W_i] \leq W_i, \text{ for $i \geq 0$}$.
\end{definition}
Next, Markov's inequality quantifies how unlikely it is for a test process to grow large.
\begin{proposition}[Markov's inequality]
    Let $W_0,W_1 \dots, W_n$ be a test process. For any $\delta \in (0,1)$ it holds that $\Pbb\left\{W_n \geq \frac1\delta \right\} \leq \delta$.
\end{proposition}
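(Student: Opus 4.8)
The plan is to reduce the statement to the ordinary Markov inequality by first establishing that $\E[W_n] \leq 1$, and then controlling the tail. For the first part I would argue by induction on $i$ that $\E[W_i] \leq 1$ for every $i \in \{0, 1, \dots, n\}$. The base case $\E[W_0] = 1$ is given. For the inductive step, assuming $\E[W_i] \leq 1$, I take expectations on both sides of the defining inequality $\E[W_{i+1} \mid W_0, \dots, W_i] \leq W_i$ and apply the tower rule:
\[
\E[W_{i+1}] = \E\bigl[\E[W_{i+1}\mid W_0,\dots,W_i]\bigr] \leq \E[W_i] \leq 1~.
\]
Monotonicity of expectation is legitimate here because all the $W_i$ are non-negative, so the conditional expectations are well defined (possibly $+\infty$ a priori, but the chain of inequalities shows they are finite). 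Iterating from $i = 0$ to $i = n-1$ yields $\E[W_n] \leq 1$.

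For the second part, since $W_n \geq 0$ by the definition of a test process, the classical Markov inequality applied to the threshold $1/\delta > 0$ gives
\[
\Pbb\left\{W_n \geq \tfrac{1}{\delta}\right\} \leq \delta\,\E[W_n] \leq \delta~,
\]
where the last step uses $\E[W_n] \leq 1$ from the first part. This completes the argument.

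I do not expect any real obstacle here: the only points requiring a moment's care are the non-negativity of the $W_i$ (needed both to invoke the scalar Markov inequality and to justify taking expectations through the conditional-expectation inequality) and the correct use of the tower property with the natural filtration generated by $W_0, \dots, W_i$. Everything else is routine.
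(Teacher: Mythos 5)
Your proof is correct and follows essentially the same route as the paper: bound $\E[W_n]$ by $1$ via the tower rule and then apply the scalar Markov inequality to the non-negative variable $W_n$. If anything, you are slightly more careful than the paper's one-line proof, which writes $\E[W_n]=\E[W_0]=1$ even though the supermartingale-type definition only guarantees $\E[W_n]\leq 1$, exactly as your induction shows.
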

\begin{proof}
    $\E[W_n] \leq \E[W_0] = 1$ and $W_n > 0$. Then, $\mathbf{1}_{W_n \geq \frac{1}{\delta}} \leq \delta W_n $ and take expectation.
\end{proof}


Finally, we introduce the betting-based confidence intervals.
\begin{theorem}\label{thm:test_by_bet}
    A confidence interval obtained within the following scheme has coverage at least $1-\delta$.

    \begin{table}[h!]
        \hrule
        \vspace{1pt}
        \noindent
        \begin{tabularx}{\linewidth}{@{} l >{\raggedright\arraybackslash}X @{}}
        \multicolumn{2}{c}{\textbf{Confidence interval by betting } }\\
        \addlinespace[0.75ex]
        \textbf{Objective:} & Construct a ($1-\delta$)-confidence interval for $\E[X]$ from  $X_1, \dots, X_n \in [0,1]$.\\
        \textbf{Procedure:} & For each $m \in [0,1]$, form a null hypothesis $H_0(m): \E[X] = m$.\\
                            & Let $S=\left\{m \mid H_0(m)  \text{ not rejected by betting at $1-\delta$ level}\right\}$.\\
        \textbf{Outcome:}  & Interval $I$ such that $S \subset I$, then $I$ is a  ($1-\delta$)-CI for $\E[X]$.
        \end{tabularx}
        \vspace{0.3pt}
        \hrule

        \vspace{1pt}
        \hrule
        \vspace{1pt}
        \noindent
        \begin{tabularx}{\linewidth}{@{} l >{\raggedright\arraybackslash}X @{}}
        \multicolumn{2}{c}{\textbf{Testing by betting}} \\
        \addlinespace[0.75ex]
        \textbf{Objective:} & Test the null hypothesis $H_0$ on $X_1, \dots, X_n$ at confidence level $1-\delta$. \\
        \textbf{Procedure:} & Construct $W_1, \dots, W_n$ such that it is a test process under  $H_0$. \\
        \textbf{Outcome:}  & If $W_n \geq \frac{1}{\delta}$, reject $H_0$ at confidence level $1-\delta$. \\
        \end{tabularx}
        \vspace{0.3pt}
        \hrule
        \end{table}
\end{theorem}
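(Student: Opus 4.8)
The plan is to reduce the coverage guarantee to a single application of the Markov inequality proposition, applied at the true mean, and to avoid entirely any union bound over the continuum of candidate values $m \in [0,1]$. The crucial observation is that to certify $\Pbb\{\mu \in I\} \geq 1-\delta$ it suffices to control the one event ``$H_0(\mu)$ is rejected,'' because by construction $S \subseteq I$, and $\mu \in S$ exactly when $H_0(\mu)$ is not rejected; the behaviour of the procedure at the ``wrong'' values of $m$ plays no role in whether $\mu$ is covered.

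First I would make the ``testing by betting'' box rigorous. Fix an arbitrary $m \in [0,1]$ and suppose the null $H_0(m) : \E[X] = m$ holds. By the specification in the Procedure line, the sequence $W_0 = 1, W_1, \dots, W_n$ attached to $H_0(m)$ is a test process under $H_0(m)$: it is non-negative, starts at expectation $1$, and is a supermartingale with respect to its own history. Applying the Markov's inequality proposition to this test process gives $\Pbb\{W_n \geq \frac{1}{\delta}\} \leq \delta$, i.e. the probability of rejecting $H_0(m)$ when it is in fact true is at most $\delta$. This is precisely validity of the test at confidence level $1-\delta$.

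Second, I would instantiate this at $m = \mu := \E[X]$, the unknown but fixed true mean. Since $H_0(\mu)$ is a true null, the previous step yields $\Pbb\{H_0(\mu)\text{ rejected}\} \leq \delta$, hence
\[
\Pbb\{\mu \in S\} \;=\; \Pbb\{H_0(\mu)\text{ not rejected}\} \;\geq\; 1-\delta .
\]
Because the Outcome line requires $S \subseteq I$, we have $\{\mu \in S\} \subseteq \{\mu \in I\}$, and therefore $\Pbb\{\mu \in I\} \geq \Pbb\{\mu \in S\} \geq 1-\delta$, which is the asserted coverage.

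The step requiring the most care is conceptual rather than computational: one must resist union-bounding over all $m$ (which would be vacuous) and instead see that only the test at $m = \mu$ is ever ``on trial'' for coverage. A minor technical point I would note in passing is measurability — $\{\mu \in S\}$ coincides with $\{W_n < \frac{1}{\delta}\}$ for the test process attached to the fixed constant $\mu$, so it is a genuine event, and taking $I$ to be, say, the smallest interval containing $S$ keeps $I$ measurable in the $X_i$; since $n$ is finite, no further subtlety arises.
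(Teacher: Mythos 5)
Your proposal is correct and follows essentially the same route as the paper: instantiate the test at the true mean $\mu$, note the attached wealth sequence is a test process under $H_0(\mu)$, and apply Markov's inequality to bound the false-rejection probability by $\delta$, which gives coverage since $S \subseteq I$. The additional remarks on avoiding a union bound and on measurability are fine but not needed beyond the paper's argument.
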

\begin{proof}
    If the mean $\mu$ is not contained in the resulting confidence interval, the null hypothesis $H_0(\mu): \E[X] = \mu$ was rejected by betting.
    The corresponding sequence $W^\mu_1, \dots, W_n^\mu$ is a test process and because it was rejected, we have $W_n^\mu \geq \frac{1}{\delta}$.
    But this happens with probability at most $1-\delta$ by Markov's inequality.
\end{proof}

\noindent\textbf{Test process under a null hypothesis:} A popular process (optimal in a certain sense~\citep{Clerico2024}) for testing a null hypothesis $H_0(m): \mu = m$ for a given $m \in [0,1]$ is:
\begin{align}\label{eq:linear_tp}
     W_0^m = 1 && W_i^m = W_{i-1}^m \left(1 + \ell_i \cdot (X_i - m)\right), \text{ for $i\geq1$},
\end{align}
where $\ell_i \in \left[\frac{1}{m-1}, \frac1m\right]$ is our betting strategy. The betting strategy is, of course, independent of $X_{\geq i}$ but can depend on $X_{<i}$ and the other known quantities $m, n, \delta$.
Under the null hypothesis, we have $\E[W_{i+1}\mid W_1, \dots W_{i}] = \E[W_i]$. Additionally, by the bound on $\ell_i$, we ensure that $W_i^m \geq 0$, and so $\{W_i^m\}_{i=1}^n$ is a test process.
We have already used this test process in the coin-betting example, if we identify Heads (resp. Tails) with $1$ (resp. $0$) and set $m=\frac12$, then $\ell_i/2$ is the fraction of money we bet.

Theorem~\ref{thm:test_by_bet} provides a scheme on how to construct confidence intervals by testing all the possible values $m$ of the expectation of the random variable.
Such an approach would require us to test a continuum of hypotheses which is impossible in general. There are two standard ways to proceed:
\begin{enumerate}
    \item \textbf{Root finding:} If the function $m \mapsto  W_n^m$ has a simple structure (e.g., quasi-convex), then the set of non-rejected mean candidates form an interval and we can easily find its end points by numerical methods, or even in a closed form.
    \item \textbf{Discretization:} Discretize the interval $[0,1]$ into $G$ grid points $m_1  < m_2 < \dots < m_G$.
    Use the same betting strategies for all $m$ in $m_i \leq m  \leq m_{i+1}$ and all $1\leq i\leq G-1$.
    In this case, the function $m \mapsto  W_n^m$ is monotonic in between the grid points, and so evaluating $m \mapsto W_i^m$ only at grid-points provide sufficient information for the construction of the confidence interval. We adopt this approach.
\end{enumerate}

\subsection{Hoeffding and \vv{S}equential \vv{Ta}rget-\vv{R}ecalculating (\ours) Bets}

In this section, we recover the classical confidence intervals by Hoeffding's inequality within the betting framework using Theorem~\ref{thm:test_by_bet}---recovering Bernstein's intervals follows the same steps, so it is deferred to Appendix~\ref{app:bernstein}.
Our plan is to show that the Hoeffding betting strategy violates the design principles we have discussed earlier in this section.
Hence, we can fix those problems and obtain tighter confidence intervals, essentially for free. More generally, we introduce \ours-betting strategies as the ones that have the better design principles.

We will use the following lemma to derive the test processes.
\begin{lemma}[Hoeffding's lemma]\label{lem:hoeff_bern}
    Let $X \in [0,1]$ with mean $\mu = \E[X]$, then
    \begin{equation}\label{eq:hoeffding_lemma}
        \E \left[e^{\ell \cdot (X-\mu) - \frac{\ell^2}{8}}\right] \leq 1, \quad \forall \ell \in \mathbb{R}~.
    \end{equation}
    Furthermore, the sequence $W_0 = 1$, $W_{i+1} = W_i \exp \left({\ell \cdot (X_i-\mu) - \frac{\ell^2}{8}}\right)$ for $i\geq 0$ is a test process.
\end{lemma}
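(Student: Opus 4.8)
The plan is to prove the inequality~\eqref{eq:hoeffding_lemma} first and then read off the test process property almost for free. Multiplying through by $e^{\ell^2/8}$, the claim~\eqref{eq:hoeffding_lemma} is equivalent to the sub-Gaussian bound $\E[e^{\ell(X-\mu)}] \le e^{\ell^2/8}$ for every $\ell \in \mathbb{R}$, so it suffices to establish that. (The degenerate cases $\mu \in \{0,1\}$ force $X$ to be a.s.\ constant, whence the left-hand side equals $1$ and the bound is trivial; assume $\mu \in (0,1)$ from now on.)

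The first substantive step is a convexity estimate. Put $a = -\mu$ and $b = 1-\mu$, so $X - \mu$ takes values in $[a,b]$ with $b - a = 1$. Since $t \mapsto e^{\ell t}$ is convex, writing $x - \mu$ as a convex combination of the endpoints gives, for every $x \in [0,1]$,
\[
e^{\ell(x-\mu)} \;\le\; (1-x)\,e^{-\ell \mu} + x\,e^{\ell(1-\mu)} .
\]
Taking expectations and using $\E[X] = \mu$ to collapse the linear-in-$x$ terms yields
\[
\E\!\left[e^{\ell(X-\mu)}\right] \;\le\; (1-\mu)\,e^{-\ell\mu} + \mu\,e^{\ell(1-\mu)} \;=\; e^{\varphi(\ell)}, \qquad \varphi(u) \;:=\; -\mu u + \log\!\big((1-\mu) + \mu e^{u}\big).
\]
So everything reduces to the inequality $\varphi(u) \le u^2/8$ for all $u \in \mathbb{R}$, and this is the one step with real content. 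I would prove it by Taylor's theorem with Lagrange remainder: one checks directly that $\varphi(0) = 0$ and $\varphi'(0) = 0$, and that $\varphi''(u) = q(u)\big(1 - q(u)\big)$ where $q(u) = \frac{\mu e^{u}}{(1-\mu) + \mu e^{u}} \in (0,1)$; since $t(1-t) \le \tfrac14$ on $[0,1]$, we get $\varphi''(u) \le \tfrac14$ uniformly, hence $\varphi(u) = \tfrac12 \varphi''(\xi)\,u^2 \le u^2/8$ for some $\xi$ between $0$ and $u$. Combining, $\E[e^{\ell(X-\mu)}] \le e^{\ell^2/8}$, which is~\eqref{eq:hoeffding_lemma}. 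The main (mild) obstacle here is just the derivative bookkeeping for $\varphi''$; no cleverness beyond the classical Hoeffding argument is needed.

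For the test process claim, note $W_0 = 1$ by construction and $W_i \ge 0$ since each factor $\exp(\ell(X_i-\mu) - \ell^2/8)$ is positive. For the supermartingale inequality, observe that $W_i$ is a measurable function of the samples seen before round $i$ (equivalently of $W_0, \dots, W_i$), while the fresh increment $\exp(\ell(X_i - \mu) - \ell^2/8)$ depends only on $X_i$, which is independent of the past and distributed as $X$. Hence
\[
\E\!\left[W_{i+1} \mid W_0, \dots, W_i\right] \;=\; W_i \cdot \E\!\left[\exp\!\big(\ell(X_i-\mu) - \tfrac{\ell^2}{8}\big)\right] \;\le\; W_i,
\]
using~\eqref{eq:hoeffding_lemma} for the last inequality. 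Therefore $\{W_i\}_{i=0}^n$ satisfies the definition of a test process, completing the proof.
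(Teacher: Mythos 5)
Your proof is correct: the convexity bound to reduce to the two-point moment generating function, the computation $\varphi''(u)=q(u)(1-q(u))\le \tfrac14$ with Taylor's theorem, and the conditional-expectation step for the supermartingale property are all sound. The paper states this lemma without proof (it is the classical Hoeffding lemma), and your argument is exactly the standard one it implicitly relies on, so there is nothing to reconcile.
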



\begin{figure}[t]
\begin{minipage}{.48\linewidth}
\begin{algorithm}[H]
\caption{Hoeffding testing}\label{alg:hoeff_test}
\begin{algorithmic}
\Require i.i.d.  $X_1, \dots, X_n \in [0,1]$
\Require $\delta > 0$, $m \in [0,1]$, $n$
\State $W \gets 1$
\For{$t = 1 \ldots n$}
    \State $\ell^H \gets \sqrt{8\left(\log\frac1{\delta}\right)/n}$
    \State $ W \gets W\exp\left(\ell^H \cdot (X_t - m) - \frac{(\ell^H)^2}{8}\right)$
\EndFor
\If{$W \geq \frac1\delta$}
\State Reject $H_0(m): m = \E[X]$
\EndIf
\end{algorithmic}
\end{algorithm}
\end{minipage}%
\hspace{.5cm}
\begin{minipage}{.48\linewidth}
\begin{algorithm}[H]
\caption{\ours - Hoeffding testing}\label{alg:star_hoeff_test}
\begin{algorithmic}
\Require i.i.d.  $X_1, \dots, X_n \in [0,1]$
\Require $\delta > 0$, $m \in [0,1]$, $n$
\State $ W\gets 1$
\For{$t = 1 \ldots n$}
    \State $\ell^H_\smallours \gets \sqrt{8\left(\log\frac1{\vv{W}\delta}\right)_+/(n\vv{-t+1})}$
    \State $ W \gets W\exp\left(\ell^H_\smallours \cdot (X_t - m) - \frac{(\ell^H_\smallours)^2}{8}\right)$
\EndFor
\If{$W \geq \frac1\delta$}
\State Reject $H_0(m): m = \E[X]$
\EndIf
\end{algorithmic}
\end{algorithm}
\end{minipage}
\end{figure}

We compare the testing-by-betting algorithms. The standard Hoeffding test in Algorithm~\ref{alg:hoeff_test}, and our improved, \ours-Hoeffding test in Algorithm~\ref{alg:star_hoeff_test}.
We observe that the constant betting of Algorithm~\ref{alg:hoeff_test} violates our desiderata for a good betting algorithm.
Concretely, it may happen that at some point $W \geq \frac{1}{\delta}$, but we keep betting and end up with $W \leq \frac1 \delta$.
Additionally, it does not adapt to the current situation of how much do we need to increase $W$ and in how many rounds.
In particular, the terms $n$ and $\log\frac 1\delta$ defining the bet become irrelevant as $t$ increases. Instead, as we said, the only important quantities are i) $\frac{1}{\delta W}$: how many times we need to multiply our wealth and ii) $n-t$: the number of remaining rounds. This motivates the following definition.

\noindent\fbox{\parbox{\textwidth}{We call a betting algorithm for a test process $\{W_t\}_{t=1}^n$ a \ours-algorithm if it uses the quantities the $\log\frac1{W_t\delta}$ and $n-t$ to compute the bet at time $t$.}}

As an example, we can immediately generate the \ours-version of the Hoeffding algorithm in Algorithm~\ref{alg:star_hoeff_test} and prove that it is never worse than the original algorithm.
\begin{proposition}\label{propo:star_super}
    Let $m\in [0,1]$. Whenever Algorithm~\ref{alg:hoeff_test} rejects the null hypothesis $H_0(m): m = \E[X]$,
    then so does Algorithm~\ref{alg:star_hoeff_test} if they share the realizations $X_t$, $1 \leq t \leq n$.
\end{proposition}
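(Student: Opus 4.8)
The plan is to compare the two algorithms round by round and show that at every time $t$, the wealth $W_t^\smallours$ produced by Algorithm~\ref{alg:star_hoeff_test} dominates the wealth $W_t^H$ produced by Algorithm~\ref{alg:hoeff_test}, \emph{as long as neither has yet crossed the threshold} $\frac1\delta$; once the threshold is crossed, nothing more needs to be said since both algorithms reject exactly when the final wealth is at least $\frac1\delta$, and we only care about the event that Algorithm~\ref{alg:hoeff_test} rejects. Actually, a cleaner framing: I will show that $W_n^\smallours \geq \min(W_n^H, \frac1\delta)$ always — equivalently, if $W_n^H \geq \frac1\delta$ then $W_n^\smallours \geq \frac1\delta$ too — which is precisely the claim.

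The key observation is a monotonicity property of the per-round multiplicative update $f(\ell) := \exp(\ell(X_t-m) - \ell^2/8)$ as a function of the bet $\ell \geq 0$. For fixed realization $X_t$, the function $f$ is log-concave in $\ell$ with maximizer $\ell^\star = 4(X_t-m)$; on the increasing branch $\ell \in [0, 4(X_t-m)]$ (relevant when $X_t > m$) a larger bet helps, and on the decreasing branch a smaller bet helps. This sign-dependence is exactly why one cannot hope for a naive ``bet more is always better'' argument. Instead I would argue as follows. Suppose inductively that before round $t$ we have $W_{t-1}^\smallours \geq W_{t-1}^H$ and $W_{t-1}^\smallours < \frac1\delta$ (if $W_{t-1}^\smallours \geq \frac1\delta$ we are already done, as the \ours\ wealth can be kept from decreasing below the threshold — indeed when $\log\frac{1}{W\delta} \leq 0$ the bet is clamped to $0$ by the $(\cdot)_+$, so the wealth stays put, which already shows $W_n^\smallours \geq \frac1\delta$). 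The Hoeffding bet is $\ell^H = \sqrt{8\log\frac1\delta / n}$ and the \ours-Hoeffding bet is $\ell^H_\smallours = \sqrt{8(\log\frac{1}{W_{t-1}^\smallours\delta})_+ / (n-t+1)}$. Because $W_{t-1}^\smallours < \frac1\delta$ we have $\log\frac{1}{W_{t-1}^\smallours\delta} > 0$; and comparing numerators and denominators, $\log\frac{1}{W_{t-1}^\smallours\delta} = \log\frac1\delta - \log W_{t-1}^\smallours \geq \log\frac1\delta - \log W_{t-1}^H$ (using $W_{t-1}^\smallours \geq W_{t-1}^H$ — wait, this needs care: $\log$ is increasing, so $W_{t-1}^\smallours \geq W_{t-1}^H$ gives $\log W_{t-1}^\smallours \geq \log W_{t-1}^H$, hence $-\log W_{t-1}^\smallours \leq -\log W_{t-1}^H$, which pushes the numerator the \emph{wrong} way).

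So the honest version of the argument is subtler, and I expect the real content of the proof to be a direct comparison showing $\ell^H_\smallours \geq \ell^H$ at the first round and then reasoning about what happens once the \ours\ wealth ``gets ahead.'' Concretely: at $t=1$, $W_0^\smallours = W_0^H = 1$, so $\ell^H_\smallours = \sqrt{8\log\frac1\delta / n} = \ell^H$ and the two wealths agree after round $1$. The crucial structural fact to exploit is that the \ours\ bet is chosen so that, \emph{if the remaining $n-t+1$ rounds all came out maximally favorably in the Hoeffding sense}, the wealth would just reach $\frac1\delta$; more usefully, one can verify the invariant that the Hoeffding strategy's ``target'' $\log\frac1\delta$ decomposed over $n$ rounds allots $\frac1n\log\frac1\delta$ of log-wealth-target per round, whereas \ours\ always re-amortizes the \emph{residual} target $\log\frac{1}{W_{t-1}^\smallours\delta}$ over the \emph{residual} horizon $n-t+1$. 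The clean way to close this is to prove, by induction on $t$, the statement
\[
\frac{\bigl(\log\tfrac{1}{W_{t}^\smallours\delta}\bigr)_+}{n-t} \;\leq\; \frac{\log\tfrac1\delta}{n} \;-\; \frac{1}{n}\sum_{s=1}^{t}\Bigl(\text{log-wealth gain of Hoeffding at round } s\Bigr)\cdot\frac{n}{\,?\,},
\]
i.e.\ that the \ours\ per-remaining-round target never exceeds the Hoeffding per-round target; combined with the log-concavity of $f$ (so that, at a bet below the maximizer, a larger bet yields a larger multiplicative gain exactly when $X_t > m$, and we only ever lose relative to Hoeffding when $X_t < m$, in which case the smaller \ours\ bet is an \emph{advantage}), this yields $W_t^\smallours \geq W_t^H$ round by round until the threshold is hit, and after the threshold the clamping keeps $W^\smallours$ at or above $\frac1\delta$. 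The main obstacle is precisely this bookkeeping: making rigorous the claim that re-amortizing the residual target over the residual horizon produces a per-round bet that is ``never too small to keep up with Hoeffding on favorable rounds, and beneficially small on unfavorable rounds,'' since the sign of $X_t-m$ flips which direction is favorable. I would handle it by splitting each round into the cases $X_t \geq m$ and $X_t < m$ and tracking the two invariants — $W_t^\smallours \geq W_t^H$ and ``residual \ours\ target rate $\leq$ Hoeffding target rate'' — simultaneously.
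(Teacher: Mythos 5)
Your proposal has a genuine gap: the framing you lean on is not just unproven, it is false. The round-by-round domination $W_t^\smallours \ge W_t^H$ (equivalently your ``cleaner'' claim $W_n^\smallours \ge \min(W_n^H,\tfrac1\delta)$) fails in general. Take $n=2$, $\log\frac1\delta=1$ (so $\ell^H=2$), $m=0.5$, $X_1=0.8$, $X_2=0.2$. After round $1$ both algorithms have log-wealth $2\cdot 0.3-\tfrac{4}{8}=0.1$, so the \ours\ bet for the last round is $\sqrt{8\cdot 0.9}\approx 2.68>2$; on the unfavorable round $2$ the \ours\ log-gain is $2.68\cdot(-0.3)-2.68^2/8\approx -1.7$ versus $-1.1$ for Hoeffding, giving $W_2^\smallours<W_2^H<\frac1\delta$. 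The same example breaks the case analysis you invoke at the end: when $X_t<m$ the \ours\ bet need not be smaller than $\ell^H$ (whether it is larger or smaller depends on whether the wealth is ahead of or behind schedule, not on the sign of $X_t-m$), so ``we only ever lose relative to Hoeffding when the smaller bet is an advantage'' is wrong. Your fallback invariant is left as a sketch with a literal ``$?$'' placeholder and is never proved, so the argument does not close.

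The missing idea (and the paper's route) is to compare not wealths but the \emph{required total sum of observations}. For the exponential Hoeffding process, a constant bet $\lambda$ over $k$ remaining rounds with residual log-target $R$ rejects iff $\lambda\sum(X_i-m)-k\lambda^2/8\ge R$, i.e.\ iff the remaining sum exceeds $km+R/\lambda+k\lambda/8$, and the bet $\sqrt{8R/k}$ minimizes this threshold; this is exactly how both $\ell^H$ and $\ell^H_\smallours$ are defined. Let $Q_t$ be the already-observed sum $\sum_{i<t}X_i$ plus this minimal required future sum. A one-line computation shows that if the bet were kept unchanged from round $t$ to $t+1$, then $Q$ stays exactly constant (the observed $X_t$ cancels against the change in the residual target $R$), so re-optimizing at each round can only decrease it: $Q_n\le Q_1$. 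Since $Q_1$ is precisely the rejection threshold of Algorithm~\ref{alg:hoeff_test} on $\sum_{i=1}^n X_i$ and Algorithm~\ref{alg:star_hoeff_test} rejects whenever $\sum_{i=1}^n X_i\ge Q_n$, rejection by Algorithm~\ref{alg:hoeff_test} implies rejection by Algorithm~\ref{alg:star_hoeff_test}. This monotone-threshold argument sidesteps the sign issues that sink the wealth-domination approach.
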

\begin{proof}

    Consider Algorithm~\ref{alg:hoeff_test}. We first show that $\ell^H$ is selected in a way that $\sum_{i=1}^n X_i$ required to reject the null hypothesis is smallest possible. That is:
    \[
        \ell^H = \arg\min_{\lambda\in \mathbb{R}} \min_{X_1, \dots, X_n \in [0,1]} \sum_{i=1}^n X_i, \quad \textrm{s.t.} ~ \lambda \sum_{i=1}^n (X_i-m) - n\lambda^2/8 \geq \log\frac{1}{\delta}~.
    \]
    The constraint is $\sum_{i=1}^n X_i \geq nm+\log\frac1\delta/\lambda + n\lambda/8$. Minimizing the RHS over $\lambda$, the solution is $\lambda = \sqrt{8\log\frac1\delta/n}$.

    Now, consider Algorithm~\ref{alg:star_hoeff_test}.
    By the same argument, $\ell_{\smallours}^H$ at time $t$ is minimizing $\sum_{i=t}^n X_i$ under the constraint that the null hypothesis is rejected. Consequently,
    the required $\sum_{i=1}^n X_i$ for Algorithm~\ref{alg:star_hoeff_test} is initially the same as for Algorithm~\ref{alg:hoeff_test}, but it decreases whenever $\ell^H_\smallours$ changes with respect to the previous iteration.
\end{proof}
\begin{corollary}
    Consider generating confidence intervals using Theorem~\ref{thm:test_by_bet} and a betting algorithm.
    The confidence intervals given by the betting Algorithm~\ref{alg:star_hoeff_test}
    are not-wider than the confidence intervals by Algorithm~\ref{alg:hoeff_test}, all other things being equal. Moreover, there are cases when Algorithm~\ref{alg:star_hoeff_test} produces strictly smaller confidence intervals.
\end{corollary}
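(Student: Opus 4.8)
The plan is to get the ``not wider'' statement directly from Proposition~\ref{propo:star_super} plus a one-line monotonicity remark, and then to exhibit a concrete small instance for the strict part.

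\emph{Inclusion of the non-rejected sets.} Fix a realization $X_1,\dots,X_n$ and $\delta$, and for each of the two testing procedures let $S=\{m\in[0,1]: H_0(m)\text{ not rejected}\}$ be the set appearing in Theorem~\ref{thm:test_by_bet}, with $I\supseteq S$ the confidence interval that the post-processing returns — the convex hull of $S$ in the root-finding implementation, or the smallest interval containing all non-rejected grid points in the discretized implementation. Proposition~\ref{propo:star_super} states that for \emph{every} fixed $m$, rejection of $H_0(m)$ by Algorithm~\ref{alg:hoeff_test} implies rejection by Algorithm~\ref{alg:star_hoeff_test}; hence $S_{\smallours}\subseteq S_H$. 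Since the map $S\mapsto I$ is monotone under inclusion and is applied identically to both algorithms — this is the content of ``all other things being equal'' — we obtain $I_{\smallours}\subseteq I_H$, and therefore $\mathrm{length}(I_{\smallours})\le\mathrm{length}(I_H)$, which is the first assertion.

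\emph{Strict improvement.} It suffices to produce one realization and one $\delta$ for which $I_{\smallours}\subsetneq I_H$. Take $n=2$ and consider the lower confidence bound (the upper one is symmetric after replacing $X_i$ by $1-X_i$). In round $t=1$ both algorithms have $W=1$ and $n-t+1=n$, so $\ell^H_{\smallours}=\ell^H$ and they coincide; they can differ only in round $t=2$, where $n-t+1=1<n$ and $W=W_1^m\ne1$ for all but one value of $m$, so there $\ell^H_{\smallours}\ne\ell^H$. By the $\argmin$ characterization used in the proof of Proposition~\ref{propo:star_super}, $\ell^H_{\smallours}$ at $t=2$ is exactly the bet minimizing the value of $X_2$ needed to reach the threshold from $W_1$ with one round to go, whereas $\ell^H$ is a generally different, hence suboptimal, choice for that one-round subproblem; consequently, at such an $m$ the final wealth of Algorithm~\ref{alg:star_hoeff_test} strictly exceeds that of Algorithm~\ref{alg:hoeff_test}. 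Now let $m^\star=l_2^H$ be the lower endpoint of the Hoeffding interval, which in the root-finding implementation satisfies $W_2^{m^\star}=1/\delta$ for Algorithm~\ref{alg:hoeff_test}; picking the realization (e.g.\ $X_1=1$, $X_2$ arbitrary) and $\delta$ so that $W_1^{m^\star}\ne1$ and $m^\star$ lies strictly inside $[0,1]$ (so the test process is well defined), the previous sentence gives $W_2^{m^\star}>1/\delta$ for Algorithm~\ref{alg:star_hoeff_test}. Thus $H_0(m^\star)$ is rejected by Algorithm~\ref{alg:star_hoeff_test}, and since $m\mapsto W_2^m$ is continuous the same holds on a one-sided neighborhood of $m^\star$, so $l_2^{\smallours}>l_2^H$ strictly; together with the symmetric bound on the upper endpoint this gives $\mathrm{length}(I_{\smallours})<\mathrm{length}(I_H)$.

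\emph{Where the work is.} The first assertion is essentially a restatement of Proposition~\ref{propo:star_super}; the only thing to be careful about is making ``all other things being equal'' precise, i.e.\ that the same monotone map $S\mapsto I$ is used for both algorithms. The real effort is in the strict part: one must (i) check that $\ell^H_{\smallours}$ genuinely differs from $\ell^H$ at the $m$ sitting on the Hoeffding boundary, not merely at some unrelated $m$, and (ii) convert the resulting strict gain in final wealth at that point into a strict displacement of the endpoint, which uses continuity of $m\mapsto W_n^m$ together with the fact that the Hoeffding endpoint lies exactly at the threshold $1/\delta$. For $n=2$ all of this can be written in closed form; for larger $n$, or for the discretized implementation, it is cleanest to just verify a single numerical instance.
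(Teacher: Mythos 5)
Your first paragraph is exactly the argument the paper intends: the corollary is stated without proof, and the ``not wider'' part is indeed just Proposition~\ref{propo:star_super} (rejection by Algorithm~\ref{alg:hoeff_test} implies rejection by Algorithm~\ref{alg:star_hoeff_test}, hence inclusion of the non-rejected sets) combined with the monotonicity of the map from the non-rejected set to the reported interval; the strict part is only supported empirically in the paper (Figure~\ref{fig:star_hoeff_bern}), so your attempt to give an explicit construction is a genuinely more rigorous route.

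However, the strictness argument as written has a gap. The pivotal sentence ``$\ell^H$ is a suboptimal choice for the one-round subproblem; consequently, at such an $m$ the final wealth of Algorithm~\ref{alg:star_hoeff_test} strictly exceeds that of Algorithm~\ref{alg:hoeff_test}'' is false as a general implication: the \ours{} bet minimizes the value of $X_2$ \emph{required} to reach the threshold, which is not the same as maximizing realized wealth, and for realizations with $X_2-m$ small or negative the more aggressive \ours{} bet ends with strictly \emph{less} wealth. The implication does hold at the specific boundary point $m^\star$, but only via the tightness you invoke later: writing $R=\log\frac{1}{W_1\delta}$ and $y=X_2-m^\star$, the equality $\ell^H y-\tfrac{(\ell^H)^2}{8}=R$ forces $y> \sqrt{R/2}$ whenever $\ell^H\neq\sqrt{8R}=\ell^H_\smallours$, and then $\ell^H_\smallours y-\tfrac{(\ell^H_\smallours)^2}{8}>R$; this chain needs to be made explicit, since it is where the work is. Relatedly, your non-degeneracy condition is wrong: the two bets at $t=2$ coincide iff $W_1^{m^\star}=\delta^{-1/2}$, not iff $W_1^{m^\star}=1$, and at the Hoeffding endpoint this degeneracy occurs exactly when $X_1=X_2$ — so your suggested instance ``$X_1=1$, $X_2$ arbitrary'' fails for $X_2=1$ (there the two algorithms have identical wealth at $m^\star$ and no strict gain arises). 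Choosing instead $X_1\neq X_2$ with $\delta$ large enough that $m^\star\in(0,1)$ (e.g.\ $X_1=1$, $X_2=1/2$, $\delta=0.2$) repairs the example, and the continuity-plus-exact-threshold step that converts the strict overshoot at $m^\star$ into a strict displacement of the endpoint is fine as you state it.
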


\begin{remark}
Many popular confidence intervals including Hoeffding's, Bernstein's, and Bennet's can benefit from $\ours$-betting.
A standard way to derive concentration inequalities is to bound the cumulant generating function. 
We have seen this in Hoeffding's inequality with $\E\left[\exp(\ell\cdot(X-\mu))\right]\leq \exp(\ell^2/8)$. In general, we get $\E\left[\exp(\ell \cdot (X-\mu))\right]\leq f(\ell)$.
Then, $\ell$ is selected to minimize $\sum_{i=1}^n X_i$ under the condition that $\ell\sum_{i=1}^n(X_i - m) -\log f(\ell) \geq \log \frac1\delta$,
 which is precisely the reason why $\ours$-betting outperforms standard betting in Proposition~\ref{propo:star_super}. We demonstrate this in Figure~\ref{fig:star_hoeff_bern}.
\end{remark}

\begin{figure}[t]
\centering
\includegraphics[width=\textwidth]{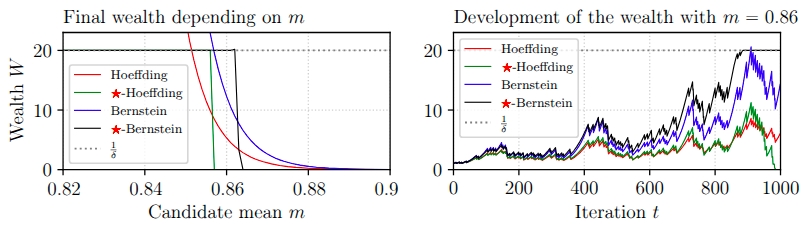}
\caption{Comparison of the Algorithms~\ref{alg:hoeff_test},\ref{alg:star_hoeff_test}, \ref{alg:bern_test}, and \ref{alg:star_bern_test} with $\delta = 0.05$ on $1000$ realizations of the Bernoulli random variable with mean $0.9$.
(\textbf{L}): We show the final value of $W$ depending on the choice of $m$ for the algorithms. The vanilla versions  have exponential dependency on $m$, while the $\ours$ versions virtually always end up with  $W \in \{0, \frac1\delta\}$. Additionally, we can confirm that the $\ours$ versions reject the null hypothesis for more values of $m$.
(\textbf{R}): Here we show the evolution of $W$ throughout the runs of the algorithms for $m=0.86$. We can see that the Bernstein's testing algorithm already achieved the required wealth, but later lost it, unlike $\ours$-Bernstein's testing which stopped betting after reaching it.
We can also see that towards the end, $\ours$-Hoeffding betting started betting very aggressively in order to have a chance to reach the desired wealth.
}\label{fig:star_hoeff_bern}
\end{figure}

We observe that $\ours$-algorithms usually end with either $0$ or $\frac{1}{\delta}$ wealth. 
This is a natural consequence of the adaptiveness, as we described in Figure~\ref{fig:star_hoeff_bern} from both sides - if $\ours$ algorithm reaches the target, it stops betting. On the other hand, if the target is not reached yet, the bets become more and more aggressive, often resulting in bankruptcy.
This is not a bad property. If we want to design confidence intervals with exact coverage $1-\delta$, the following proposition shows that it is actually necessary.

\begin{proposition}\label{prop:bankrupt}
    Consider the process $W_i = W_{i-1}(1+\ell_i \cdot (X_i-\mu))$ with $W_0=1$ and an algorithm for selecting bets $\ell_i$.
    If the algorithm always finishes with $W_n \in \{0, \frac{1}{\delta}\}$, then the algorithm falsely rejects the hypothesis $H_0(\mu): \mu = \mu$ at confidence level $1-\delta$ with probability precisely $1-\delta$.
\end{proposition}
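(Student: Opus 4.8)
The plan is to observe that under the true null hypothesis $H_0(\mu)$ the wealth process $\{W_i\}$ is not merely a test process but an honest martingale started at $1$, so that $\E[W_n]=1$ \emph{exactly}, and then to combine this single identity with the assumed two-point support $\{0,\tfrac1\delta\}$ of $W_n$.

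First I would verify that $\{W_i\}$ is a martingale for the filtration $\mathcal{F}_i=\sigma(X_1,\dots,X_i)$. The bet $\ell_i$ is a function of $X_1,\dots,X_{i-1}$ and the known constants $\mu,n,\delta$, hence is $\mathcal{F}_{i-1}$-measurable; since $X_i$ is independent of $\mathcal{F}_{i-1}$ with $\E[X_i]=\mu$ under $H_0(\mu)$,
\[
\E[W_i\mid\mathcal{F}_{i-1}] = W_{i-1}\bigl(1+\ell_i\,\E[X_i-\mu\mid\mathcal{F}_{i-1}]\bigr)=W_{i-1}.
\]
As in \eqref{eq:linear_tp} with $m=\mu$, the bets $\ell_i$ range over the bounded set $\bigl[-\tfrac1{1-\mu},\tfrac1\mu\bigr]$ that keeps the process non-negative, so $|W_i|\le W_{i-1}\bigl(1+\max\{\tfrac1\mu,\tfrac1{1-\mu}\}\bigr)$ and an easy induction from $W_0=1$ gives $\E[|W_i|]<\infty$ for every $i$; taking expectations in the display and unrolling then yields $\E[W_n]=\E[W_0]=1$. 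This is the only step where the \emph{correct} value is used: for a general $m\neq\mu$ one has $\E[X_i-m]\neq0$ and only obtains the test-process inequality $\E[W_n]\le1$.

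Second, I would feed in the assumed support. Writing $p\bydef\Pbb\{W_n=\tfrac1\delta\}$, the hypothesis $W_n\in\{0,\tfrac1\delta\}$ almost surely gives $1=\E[W_n]=0\cdot(1-p)+\tfrac1\delta\,p=\tfrac p\delta$, hence $p=\delta$ exactly (in particular $p\in(0,1)$, so neither terminal state is degenerate). Finally, the algorithm rejects $H_0(\mu)$ precisely when $W_n\ge\tfrac1\delta$, which under the two-point support is exactly the event $\{W_n=\tfrac1\delta\}$; hence the false-rejection probability equals $\delta$ exactly --- equivalently, $H_0(\mu)$ is not rejected (the resulting CI covers $\mu$) with probability precisely $1-\delta$.

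\textbf{Main obstacle.} There is no genuine obstacle here; the only point requiring care is the first step, namely getting $\E[W_n]=1$ rather than merely $\E[W_n]\le1$. That equality uses both the non-negativity of the process (so the recursion preserves integrability and the conditional-expectation manipulations are legitimate) and the exactness of the fair-game condition $\E[X_i-\mu]=0$, which holds precisely because we are testing the true value of the mean.
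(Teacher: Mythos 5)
Your proposal is correct and follows essentially the same route as the paper's proof: under the true mean the wealth process is a martingale with $\E[W_n]=1$, and combining this with the two-point support $\{0,\tfrac1\delta\}$ forces $\Pbb\{W_n=\tfrac1\delta\}=\delta$. You simply spell out the integrability and measurability details that the paper leaves implicit, and your closing remark correctly reconciles the statement's wording (rejection with probability $\delta$, i.e.\ coverage exactly $1-\delta$).
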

\begin{proof}
By construction, $\E[W_n] = 1$. Let the algorithm halt with $ W_n = \frac{1}{\delta}$ with probability $p$, then $\E[W_n] = \frac{p}{\delta}$, yielding $p=\delta$ as required.
\end{proof}

\section{STaR-Bets Algorithm}
In the previous section, we have described the key concept of \ours-betting that can be implemented within the majority of common (finite time) betting schemes.
It remains to choose a good betting scheme. So, here will introduce our new algorithm: STaR-Bets.
Now, we will present an informal reasoning to derive a first version of our betting algorithm.
Then, we prove formally that it produces intervals with the optimal width.

Let $X_1, \dots, X_n \in [0,1]$ be an i.i.d. sample of a random variable $X$ for which we aim to find a one-sided interval for the mean parameter.
In the following, we also assume that $n\gg \log\frac{1}{\delta}$, that is, our sample is large enough. This is not a restrictive assumption. If it is violated, then the optimal confidence interval has width $\approx 1$ for a general distribution on $[0,1]$ and we cannot hope to have a short interval anyway.

We use the processes from \eqref{eq:linear_tp} of the form $W_{i+1} = W_i(1+\ell(X_i-m))$ to refute the hypothesis $m=\E[X]$.
So, we construct a betting strategy that aims at achieving at least $\frac{1}{\delta}$ wealth after $n$ rounds.
First, we approximate the logarithm of the final wealth using a second-order Taylor expansion:
\begin{equation}\label{eq:approx_wealth}
     \sum_{i=1}^n \log(1+\ell_i\cdot(X_i - m))
     \approx  \sum_{i=1}^n \left( \ell_i \cdot (X_i - m) - \frac{\ell_i^2}{2}\cdot(X_i-m)^2 \right)~.
\end{equation}
For the moment, we will assume that this is a good approximation, but eventually, we will derive it formally.

Define $S \triangleq \sum_{i=1}^{n} (X_i-m)$ and $V \triangleq \sum_{i=1}^{n} (X_i-m)^2$. 
From the r.h.s. of~\eqref{eq:approx_wealth}, the constant bet $\ell^*$ maximizing the approximate wealth is 
\begin{equation}
\ell^*
\triangleq \arg\max_\ell \ \ell S - \frac{\ell^2}{2}V 
= \frac{S}{V}
\Rightarrow \max_\ell \ \ell S - \frac{\ell^2}{2}V
= \frac{(\ell^{*})^2}{2}V~. \label{eq:wealth_max}
\end{equation}
A reasonable approach is to try to estimate $\ell^{*}$ over time, in order to achieve a wealth close to the one in \eqref{eq:wealth_max}.
This is roughly the approach followed in previous work~\citep[see, e.g.,][]{CutkoskyO18}. 
Instead, here we follow the approach suggested in \citet{Waudby-SmithR21}: We are only interested in the case where the (approximate) log-wealth reaches $\log\frac1\delta$. In other words, the outcome of the betting game is binary: We either reach the desired log-wealth for the candidate of the mean, $m$, 
and we reject the null hypothesis $H_0(m) : \E[X] = m$, or we fail to do so.
This means that we want
\[
\frac{(\ell^{*})^2}{2}V
= \log\frac1\delta
\Leftrightarrow |\ell^*| 
= \sqrt{\frac{2\log\frac1\delta}{V}}~.
\]
Estimating $V$ online, \citet{ShekharR23} proved that this strategy will give asymptotically optimal confidence intervals almost surely.
However, we are aiming for a finite-time guarantee, which requires a completely different angle of attack.

First of all, it might not be immediately apparent why we expressed~\eqref{eq:wealth_max} as function of $V$ only, while we could also get an $S$ term.
The reason is that this is an easier quantity to estimate: Estimating $\frac{V}{n}\approx\E[(X-m)^2] \in \Theta(1)$ is easier than estimating $\frac{S}{n}\approx\E[X-m] \in \Theta(\frac{1}{\sqrt{n}})$ in the terms of relative error. Relative error is relevant because if we overestimate $\ell^{*}$ by a factor of $2$, then we would end up with $0$ (approximated) log-wealth.

Also, observe that $\ell^\star \in \Theta(\sqrt{\log\frac1\delta/n})$  which justifies the Taylor approximation in \eqref{eq:approx_wealth}. Indeed, we aim to approximate $\ell^\star$ and $\abs{X_i-m} \leq 1$, so the error of the Taylor approximation is
\[ 
n \cdot\mathcal{O}\left(\left(\sqrt{\frac{\log\frac1\delta}{n}}\right)^3\right)
= \mathcal{O}\left(\log\frac 1\delta \sqrt{\frac{\log \frac 1\delta}{n}} \right)~.
\]
This error is of an order smaller than $\log\frac 1\delta$ as long as $n \gg \log\frac1\delta$, as we have assumed.
Recalling our goal of achieving a log-wealth of $\log\frac 1 \delta$, this means that the Taylor approximation is sufficiently good.

Let us now examine the approximate length of the confidence interval consisting of candidate means $m$ for which we did not reach log-wealth $\log\frac1\delta$:
\begin{equation}\label{eq:approx_width}
 \ell^* S - \frac{\ell^{*2}}{2}V 
 \leq \log\frac{1}{\delta} 
 \implies  
 S \sqrt{\frac{2\log\frac{1}{\delta}}{V}} 
 \leq 2 \log \frac{1}{\delta} 
 \implies 
 S 
 \leq \sqrt{2V\log \frac1\delta }~.
\end{equation}
We can further approximate $V \approx \E[V] = n\V[X] + \E[S]^2/n \approx n\V[X]$, since $n\V[X] \in \Theta(n)$, while $\E[S]^2/n \in \Theta(1)$, then the bound from~\eqref{eq:approx_width} matches the width from the Bernstein's inequality, which is the one of a Normal approximation and thus cannot be improved.

\begin{figure}[t]
\begin{minipage}{.48\linewidth}
\begin{algorithm}[H]
\caption{Testing with Bets}\label{alg:our_algo_vanilla}
\begin{algorithmic}
\Require i.i.d.  $X_1, \dots, X_n \in [0,1]$
\Require $\alpha, \delta > 0$, $m \in [0,1]$, $n$
\State $V \gets 0, \text{lgW} \gets 0$
\For{$1 \leq t \leq n$}
    \State $v \gets \left( V/(t-1) + \frac{10\log\frac{8}{\alpha}n}{(t-1)^2}\right) \land 1$
    \State $\ell \gets \sqrt{2\log\frac1\delta/(nv)} \land 1$
    \State $\text{lgW} \gets \text{lgW} + \log(1+ \ell (X_i-m))$
    \State $V \gets V + (X_i-m)^2$
\If{$\text{lgW} \geq \log\frac 1\delta$}  Rejected
\EndIf
\EndFor
\end{algorithmic}
\end{algorithm}\end{minipage}%
\hspace{.5cm}
\begin{minipage}{.48\linewidth}
\begin{algorithm}[H]
\caption{Testing with \ours-Bets}\label{alg:our_algo}
\begin{algorithmic}
\Require i.i.d.  $X_1, \dots, X_n \in [0,1]$
\Require $\alpha, \delta > 0$, $m \in [0,1]$, $n$
\State $V \gets 0, \text{lgW} \gets 0$
\For{$1 \leq t \leq n$}
    \State $v \gets \left( V/(t-1) + \frac{10\log\frac{8}{\alpha}n}{(t-1)^2}\right) \land 1$
    \State $\ell \gets \sqrt{2(\log\frac1\delta \vv{-\text{lgW}})/((n\vv{-t+1})v)} \land 1$
    \State $\text{lgW} \gets \text{lgW} + \log(1+ \ell (X_i-m))$
    \State $V \gets V + (X_i-m)^2$
\If{$\text{lgW} \geq \log\frac 1\delta$}  Rejected
\EndIf
\EndFor
\end{algorithmic}
\end{algorithm}
\end{minipage}
\end{figure}

The only missing ingredient is how to estimate $V$ over time.
Recalling that we aim for a low relative error, after seeing $t$ outcomes we will use the estimator $V/n \approx \frac{\sum_{i=1}^t (X_i-m)^2}{t}+ \frac{n \log \frac{1}{\alpha}}{t^2}$, where the second term is carefully constructed to guarantee that the estimate has a small relative error with high probability (depending on $\alpha$) uniformly over $m$ and $1\leq t\leq n$.

The final algorithm is shown in Algorithm~\ref{alg:our_algo_vanilla}, where we run the above betting procedure for a possible values of $m \in [0,1]$ and reject them as mean candidates if the log-wealth is at least $\log \frac{1}{\delta}$.

\begin{theorem}\label{thm:main}
    For every random variable $X\in [0,1]$ with mean $\mu$ and variance $\sigma^2>0$, every $\alpha,\delta \in (0,1)$, and $c > 0$, there is $n_0$ depending on $\alpha, \delta, \sigma^2, c$, such that for all $n \geq n_0$, Algorithm~\ref{alg:our_algo_vanilla} rejects at confidence level $1-\delta$  every $m$ satisfying
    \begin{equation}\label{eq:main_thm}
         m \leq \frac{\sum_{i=1}^n X_i}{n} - \sigma\sqrt{\frac{(2+c) \log\frac1\delta}{n}}~.
    \end{equation}
    with probability at least $1-\alpha$. Furthermore, we have that $n_0 \in \mathcal{O}(c^{-4})$ when treating the other variables as constants.
\end{theorem}
The proof is in Appendix~\ref{app:proofs}.
\begin{corollary}
    Algorithm~\ref{alg:our_algo_vanilla} can be used in the framework of Theorem~\ref{thm:test_by_bet} (using the discretization technique for constructing intervals) to construct confidence intervals
    of the width up to a $1+o(1)$ factor diminishing with $n$.
\end{corollary}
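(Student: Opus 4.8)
The final statement is an immediate corollary of Theorem~\ref{thm:main}, so I describe a plan for the latter and close the corollary in one stroke. \textbf{Corollary.} Run Algorithm~\ref{alg:our_algo_vanilla} and its mirror image (testing $m$ from above) on a polynomially fine grid and let $I$ be the smallest interval containing the non-rejected grid points; coverage $1-\delta$ is Theorem~\ref{thm:test_by_bet}, and Theorem~\ref{thm:main}, invoked with a parameter $c=c_n\downarrow 0$ chosen slowly enough that $n\ge n_0(\alpha,\delta,\sigma^2,c_n)$ eventually, shows that with probability $1-\alpha$ the lower endpoint of $I$ is at most $\sigma\sqrt{(2+c_n)\log\frac1\delta/n}$ below $\bar X_n$ (symmetrically above); since the normal-approximation half-width is $\sigma\sqrt{2\log\frac1\delta/n}$, the ratio is $\sqrt{1+c_n/2}=1+o(1)$, the $O(1/G)$ discretization error being negligible. \textbf{So everything rests on Theorem~\ref{thm:main}}, which I now sketch. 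Write $\sigma_m^2:=\E[(X-m)^2]=\sigma^2+(\mu-m)^2\ge\sigma^2$, $\ell^\dagger:=\sqrt{2\log\frac1\delta/(n\sigma_m^2)}$, $V_n:=\sum_i(X_i-m)^2$, and let $\text{lgW}_n^m=\sum_t\log(1+\ell_t(X_t-m))$ be the final log-wealth on candidate $m$, with $\ell_t=\sqrt{2\log\frac1\delta/(nv_t)}\wedge 1$. It suffices to build an event $\mathcal G$ with $\Pbb(\mathcal G)\ge 1-\alpha$ on which, for all $n\ge n_0$ and all feasible $m$ (those with $\bar X_n-m\ge\sigma\sqrt{(2+c)\log\frac1\delta/n}$), $\text{lgW}_n^m\ge\log\frac1\delta$. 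Provided $\ell_t\le\tfrac12$ for every $t$ on $\mathcal G$, the Taylor bound $\log(1+x)\ge x-\tfrac12x^2-x^3$ applies termwise and, since $\sum_t(X_t-\bar X_n)=0$,
\[
\text{lgW}_n^m\ \ge\ (\bar X_n-m)\sum_t\ell_t\ +\ \sum_t(\ell_t-\bar\ell)(X_t-\bar X_n)\ -\ \tfrac12\sum_t\ell_t^2(X_t-m)^2\ -\ \sum_t\ell_t^3 ,
\]
$\bar\ell:=\tfrac1n\sum_t\ell_t$. The plan: the first term (``signal'') is $\ge\sqrt{2(2+c)}\log\frac1\delta(1-o(1))$, the third (``penalty'') is $\le\log\frac1\delta(1+o(1))$, and the second (residual) and fourth (cubic) are $o(\log\frac1\delta)$; as $\kappa_c:=\sqrt{2(2+c)}-2>0$, once $n_0$ makes the error terms $<\tfrac{\kappa_c}{2}\log\frac1\delta$ the right-hand side exceeds $\log\frac1\delta$.

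\textbf{Good event, correction term, penalty and cubic.} The event $\mathcal G$ encodes, via Bernstein/Freedman inequalities union-bounded over an $O(n^{-2})$-grid of $m\in[0,1]$ (legitimate as all quantities are $O(1)$-Lipschitz in $m$), with $\alpha$ split among the sub-events: (i) $|\bar X_n-\mu|\le\sigma\sqrt{C\log\frac1\alpha/n}$; (ii) $V_n/n=\sigma_m^2(1+o(1))$ and $\tfrac1{t-1}\sum_{i<t}(X_i-m)^2=\sigma_m^2(1+o(1))$ for every $t>\sqrt n$ and every $m$; (iii) the drift martingales of the last paragraph controlled at level $1-\alpha/4$. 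Now the bespoke term earns its keep: since $v_t\ge\tfrac{10\log(8/\alpha)n}{(t-1)^2}\ge 1$ for $t\le\sqrt n$, those rounds have $v_t=1$ and $\ell_t=\sqrt{2\log\frac1\delta/n}\le\tfrac12$ (for $n\ge 16\log\frac1\delta$), while for $t>\sqrt n$ item (ii) gives $v_t=\sigma_m^2(1+o(1))$; hence $\sigma_m^2(1-o(1))\le v_t$ for all $t$, $v_t\le\sigma_m^2(1+o(1))$ for all but the first $\sqrt n=o(n)$ rounds, and $\ell_t\le\sqrt{4\log\frac1\delta/(n\sigma^2)}\le\tfrac12$ for $n\ge n_0$, which validates the Taylor step. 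Using $\ell_t^2=\tfrac{2\log\frac1\delta}{nv_t}$ (the $\wedge1$ being inactive on $\mathcal G$) and bounding $v_t\ge 1$ for $t\le\sqrt n$, $v_t\ge\sigma_m^2(1-o(1))$ otherwise, $\tfrac12\sum_t\ell_t^2(X_t-m)^2=\tfrac{\log\frac1\delta}{n}\sum_t\tfrac{(X_t-m)^2}{v_t}\le\tfrac{\log\frac1\delta}{n}\bigl(\sqrt n+\tfrac{V_n}{\sigma_m^2}(1+o(1))\bigr)=\log\frac1\delta(1+o(1))$; the cubic is $O(n(\ell^\dagger)^3)=O((\log\frac1\delta)^{3/2}/(\sqrt n\,\sigma^3))=o(\log\frac1\delta)$; and $\sum_t\ell_t\ge n\ell^\dagger(1-o(1))=\sqrt{2n\log\frac1\delta}/\sigma_m\,(1-o(1))$.

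\textbf{Signal, via a monotonicity trick.} Plugging the hypothesis into the first term,
\[
(\bar X_n-m)\sum_t\ell_t\ \ge\ \frac{\bar X_n-m}{\sigma_m}\sqrt{2n\log\tfrac1\delta}\,(1-o(1))\ =\ g(d)\,\sqrt{2n\log\tfrac1\delta}\,(1-o(1)),\qquad g(d):=\frac{d+e}{\sqrt{\sigma^2+d^2}},
\]
with $d:=\mu-m$, $e:=\bar X_n-\mu$. On $\mathcal G$, $|e|\le\sigma\sqrt{C\log\frac1\alpha/n}$, so $g'(d)=(\sigma^2-ed)/(\sigma^2+d^2)^{3/2}>0$ for all $d\le 1$ once $n\ge n_0$; hence $g$ is increasing over the whole feasible range $d\in[\sigma\sqrt{(2+c)\log\frac1\delta/n}-e,\ \mu]$ and is minimized at the feasibility boundary, where $|d_0|\le\sigma\cdot O(\sqrt{\log\frac1{\alpha\delta}/n})=o(\sigma)$ and $d_0+e=\sigma\sqrt{(2+c)\log\frac1\delta/n}$, so $g(d_0)=\sqrt{(2+c)\log\frac1\delta/n}\,(1+o(1))$. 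Therefore $(\bar X_n-m)\sum_t\ell_t\ge\sqrt{2(2+c)}\log\frac1\delta(1-o(1))$ uniformly over feasible $m$ — this is the one place where the ``$2+c$'' in the hypothesis is exactly what the argument needs.

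\textbf{Residual is negligible; conclusion; main obstacle.} Because it equals $\sum_t(\ell_t-\bar\ell)(X_t-\bar X_n)$, only the \emph{variation} of the bets contributes; on $\mathcal G$ the bets barely move ($\ell_t=\ell^\dagger(1+o(1))$ and $\bar\ell=\ell^\dagger(1+o(1))$ for $t>\sqrt n$, $\ell_t\in[\ell^\dagger\sigma_m,\ell^\dagger]$ for $t\le\sqrt n$). Split at $t=\sqrt n$: the first $\sqrt n$ rounds use a deterministic bet $\ell^{(0)}$, contributing $\ell^{(0)}\sum_{t\le\sqrt n}(X_t-\bar X_n)$ with $|\sum_{t\le\sqrt n}(X_t-\bar X_n)|=O(n^{1/4}\sqrt{\log\frac1\alpha})$ (Bernstein on $\sqrt n$ i.i.d. terms plus $\sqrt n|e|$), hence $O(n^{-1/4}\sqrt{\log\frac1\delta\log\frac1\alpha})=o(1)$; for $t>\sqrt n$ the relative deviation $\eta_t:=\ell_t/\ell^\dagger-1$ is predictable with $|\eta_t|=o(1)$, so $\sum_{t>\sqrt n}\eta_t(X_t-\mu)$ is a martingale of predictable variation $\le o(1)^2 n\sigma^2$ (bounded at scale $o(1)\sqrt{n\log\frac1\alpha}$ by (iii)) and $|e|\sum_{t>\sqrt n}|\eta_t|\le|e|\,n\,o(1)$; multiplying by $\ell^\dagger=O(\sqrt{\log\frac1\delta/n}/\sigma)$ turns both into $o(1)$. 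So the residual is $o(1)$ on $\mathcal G$, uniformly in feasible $m$, and the display above gives $\text{lgW}_n^m\ge(\sqrt{2(2+c)}-1-o(1))\log\frac1\delta+o(1)\ge\log\frac1\delta$ for $n\ge n_0(\alpha,\delta,\sigma^2,c)$. The \textbf{main obstacle} is exactly this last paragraph together with item (ii): one must show the \emph{online} variance estimate tracks $\sigma_m^2$ with $o(1)$ relative error \emph{uniformly in $m$ and $t$} (the reason for the $\tfrac{10\log(8/\alpha)n}{(t-1)^2}$ term, which dominates the still-uninformative empirical variance during the first $\Theta(\sqrt n)$ rounds), and that the error from replacing the ideal constant bet by the online bets — which a naive bounded-bet estimate would leave at the non-vanishing scale $\Theta(\sqrt{\log\frac1\delta\log\frac1\alpha})$ — actually collapses to $o(1)$ once one centers at $\bar X_n$ and exploits that the bets barely move; the rest (Lipschitz-grid union bounds for uniformity in $m$, choice of thresholds, tracking the $o(1)$'s as functions of $\alpha,\delta,\sigma^2,c$) is routine bookkeeping.
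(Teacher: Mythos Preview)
Your reduction of the corollary to Theorem~\ref{thm:main} (with a diagonal $c=c_n\downarrow 0$) is exactly what the paper intends; it gives no separate proof of the corollary. Your sketch of Theorem~\ref{thm:main} shares the paper's skeleton---Taylor-expand the log-wealth, split time into a warm-up and a convergent phase where the bets have stabilized near $\ell^\dagger$, and control the variance estimate and partial sums via time-uniform Bennett/Freedman bounds (the paper's Propositions~\ref{propo:variance_bounds} and~\ref{propo:warmup_bound})---but the execution differs. The paper uses the purely quadratic bound $\log(1+x)\ge x-(\tfrac12+c_1)x^2$ of Lemma~\ref{lem:log}, handles the linear term $\sum_i\ell_i(X_i-m)$ directly, and in the convergent phase simply replaces $\ell_i$ by $(1-c_3)\ell_{\mathrm{opt}}$; you instead keep an explicit cubic, center at $\bar X_n$, and isolate a residual $\sum_t(\ell_t-\bar\ell)(X_t-\bar X_n)$ that you kill because the bets barely move---arguably a cleaner decomposition, at the price of one extra martingale estimate. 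For uniformity in $m$ the paper proves its concentration propositions already uniformly over $m\in[0,1]$ (via $(X-m)^2=X^2-2mX+m^2$) and then checks a single boundary $m$; your Lipschitz-grid union bound and the monotonicity trick through $g(d)$ are a legitimate alternative route.

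Two small repairs. First, $\log(1+x)\ge x-\tfrac12x^2-x^3$ is false for $x<0$ (try $x=-0.4$); you need $-|x|^3$, after which $\sum_t|\ell_t(X_t-m)|^3\le\sum_t\ell_t^3$ and your cubic bound survives. Second, your phase split at $t=\sqrt n$ is too early: the correction term $10\log(8/\alpha)n/(t-1)^2$ is still $\Theta(1)$ there, so $v_t$ is \emph{not} yet $\sigma_m^2(1+o(1))$ and your $\eta_t=o(1)$ claim fails in the range $t\in[\sqrt n,n^{3/4}]$. The paper takes the warm-up through $c_2 n$ rounds with $c_2=O(n^{-1/4})$, i.e.\ up to $t\approx n^{3/4}$, which makes both the correction term and the relative error in Proposition~\ref{propo:variance_bounds} vanish. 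Moving your split accordingly fixes the penalty and residual bounds; the extra $O(n^{3/4})$ rounds are still $o(n)$ and contribute only $o(1)$ once you invoke the universal lower bound $v_t\ge\sigma_m^2/2$ from Proposition~\ref{propo:variance_bounds}.
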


Our proposed algorithm  is \ours-Bets in Algorithm~\ref{alg:our_algo}, the \ours-version of  Algorithm~\ref{alg:our_algo_vanilla}. We discuss the implementation details in Appendix~\ref{app:implementation}.

We remark that while the algorithm is tailored to confidence intervals, it naturally produces a confidence sequence as a by-product. More concretely, we do not need to construct the  test process $W_1, \dots W_n$ from Theorem~\ref{thm:test_by_bet} at once and then collect the non-rejected mean candidates. We can construct the test processes sequentially when new samples arrive and after each of which we can recompute the confidence interval. A mild modification (either replace Markov's inequality by Ville's inequality, or do some brain gymnastics) of the original argument shows that all the confidence intervals contain the mean simultaneously with probability $1-\delta$ in the frequentist sense. Straightforward modification of Theorem~\ref{thm:main} shows that the width of the confidence interval after observing $t$ samples is (up to a constant factor) $\sqrt{\log\frac1\delta n/t^2}$. This being said, we believe that there are generally better options when one needs confidence sequences as discussed in Section~\ref{sec:related}, such as~\cite{OrabonaJ21}.

We conclude this section by emphasizing that while we have proven an optimality result of Algorithm~\ref{alg:our_algo_vanilla} in Theorem~\ref{thm:main}, and other optimality results of \ours-technique in Proposition~\ref{propo:star_super}, when they are combined in Algorithm~\ref{alg:our_algo}, we were not able to prove these optimality properties, and we only have empirical evidence of the superiority over the competitors. The validity of the hypothesis testing (and thus of the resulting confidence interval) still holds by Theorem~\ref{thm:test_by_bet}.

\section{Experiments}
\label{sec:exp}

Now we provide some experiments suggesting that \ours-Bets yields shorter confidence intervals than alternative methods.
Here, we provide a ``teaser'' of our experiments, while the extensive experimental evaluation is in Appendix~\ref{app:exps}.
We identified several methods as our direct or indirect competitors and will briefly discuss them. 

\begin{itemize}[leftmargin=*]
    \item Confidence interval derived from the T-test. This is a widely used confidence interval in practice, but it does not have guaranteed coverage in general.
    We show that our \ours-Bets algorithm is competitive with the T-test and has guaranteed coverage for $X\in [0,1]$.
    \item  Clopper-Pearson~\citep{ClopperP34} is the best deterministic confidence interval for a binomial sample (sum of Bernoulli random variables). Nevertheless, \ours-Bets often produces shorter intervals.
    \item Randomized Clopper-Pearson~\cite{Voracek24} is the optimal binomial confidence interval. \ours-Bets is still competitive.
    \item Hedged-CI~\citep{Waudby-SmithR21} is a confidence interval based on betting and currently the best known algorithm for constructing confidence intervals. It is very similar to our Bets algorithm from~\ref{alg:our_algo_vanilla} with comparable performance. 
    The difference lies in the fact the we estimate $\E[(X-m)^2]$, while~\citep{Waudby-SmithR21} estimates $\E[(X-\mu)^2]$. However, our \ours-Bets is significantly stronger.
    \item Hoeffding's inequality and the empirical Bernstein bound are the standard ways to construct confidence intervals, and so we include them in the experiments. 
    Empirical Bernstein bound is usually weak because of the additive terms. Hoeffding's inequality is generally weak, but when $\V[X]\approx \frac 1 2$, it is competitive with some of the methods, but not with \ours-Bets.
    \item  The method\footnote{It is called practical mean bounds for small samples, so we abbreviate it as PMBSS.} of~\citet{PhanTLM21} was state of the art at the time of introduction, aiming at short intervals for very small ($10-50$) sample sizes. We show that \ours-Bets is is stronger even in this regime.
    Furthermore, this method does not scale well to large samples.
\end{itemize}

First, we perform several experiments with Beta and Bernoulli distribution to quickly assess the competing methods.
We show in Figure~\ref{fig:exp_loglog} how the widths of the confidence intervals evolve as we increase $n$ and conclude, that from our direct competitors, Hedged-CI of \citet{Waudby-SmithR21} is the strongest one, thus we use it in our further experiments. In Figure~\ref{fig:exp_bernoulli}, we provide teaser for the extensive experimental evaluation in Appendix~\ref{app:exps}. Specially, we introduce a CDF figure that shows the distribution of the lower-confidence bounds over 1000 repetitions of the experiments with the same setting, which allows for a systematic comparison of the methods.

\begin{figure}[t]
\centering
\includegraphics[width=\textwidth]{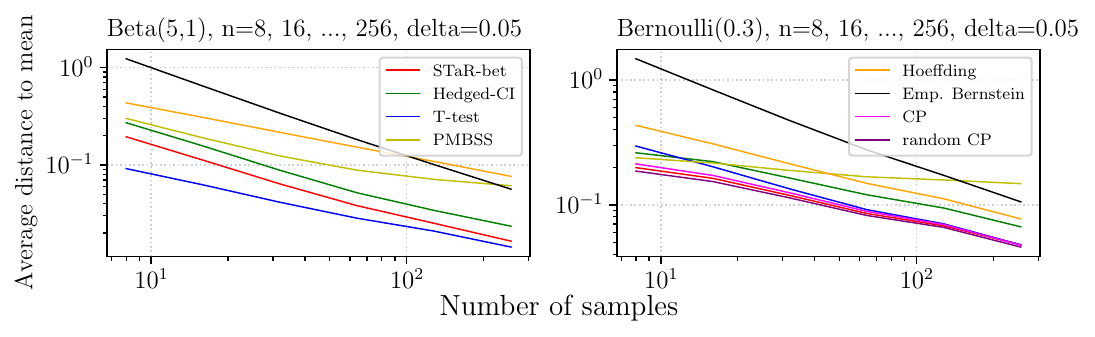}
\caption{
    We directly compare the widths of the confidence intervals.
    Note the $\log-\log$ scale.
    For all the methods and every $n=8, 16, \dots, 256$, we have estimated the mean $1000\times$ of a fresh realization of the corresponding random variable and plotted the average distance to the mean.
(L): When estimating the mean of beta distribution, we observe that that with increasing $n$, we are getting closer to the performance of T-test.
(R): When estimating Bernoulli mean, the performance of \ours-Bets is very similar to the specialized optimal methods.
}\label{fig:exp_loglog}
\end{figure}

\begin{figure}[h]
\centering
\includegraphics[width=\textwidth]{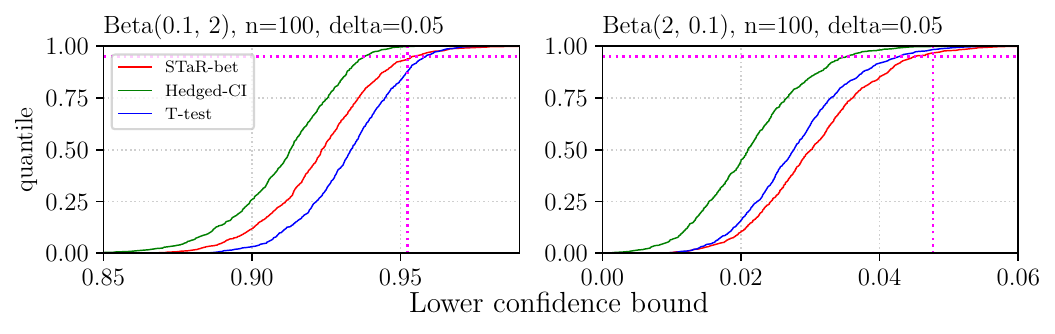}
\includegraphics[width=\textwidth]{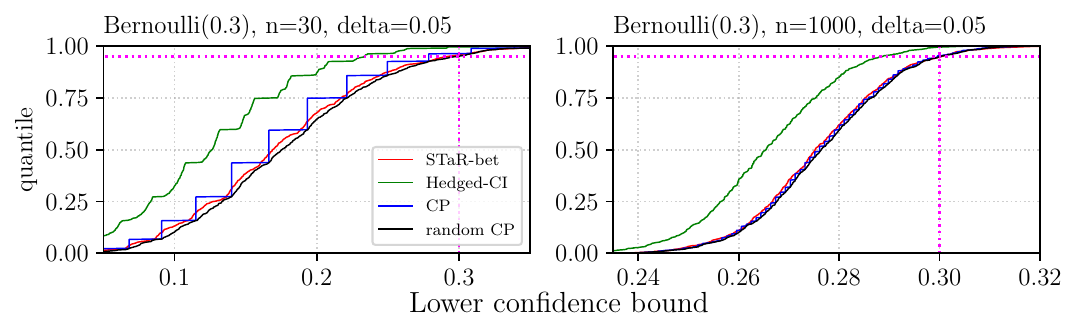}
\caption{CDF figure: When a curve corresponding to a method passes through a point $(x,y)$, it means that the $y$-fraction (of $1000$ repetitions) of lower confidence bounds was smaller than $x$.
The vertical magenta line shows the mean position, and the vertical one shows the $1-\delta$ quantile. We can see that in both cases, \ours-Bets passes through the intersection, implying that the coverage is $\approx 1-\delta$. 
(Top) Estimation of the mean of Beta distribution (L): we can see that T-test produces shorter intervals than \ours-Bets, but that it has significantly smaller coverage than claimed. (R): Here, \ours-Bets produces shortest intervals.
(Bottom)
Estimation of the mean of a Bernoulli distribution using $30$ (L) and $1000$ (R) samples. We observe that in the low sample regime, \ours-Bets is mildly worse than the unbeatable randomized Clopper-Pearson, arguably better than standard Clopper-Pearson, and significantly better than Hedging of~\cite{Waudby-SmithR21}.
In the regime of larger samples, we can see that \ours-Bets stays very close to the optimal intervals, while the competitor is still significantly worse.
}\label{fig:exp_bernoulli}
\end{figure}

\paragraph{Conclusions:}
We have introduced \ours-technique in the construction of confidence intervals that directly (strictly) improves many betting algorithms and concentration inequalities, such as the ones of Hoeffding and Bernstein.
Then, we have proposed a new betting algorithm for which he have proven that it can construct confidence intervals of the optimal length up to diminishing factor.
While the $\ours$-technique is powerful in experiments, we have only proven that it never hurts (certain class of algorithms) and that it usually helps. How much it helps remains an open theoretical question.

\newpage 
\bibliography{refs}
\bibliographystyle{plainnat}
\appendix

\section{Proof of Theorem~\ref{thm:main}}\label{app:proofs}

Before we prove the theorem, we introduce several propositions we use in the sequel. We start with a (time uniform) version of Bennett's inequality.
\begin{proposition}[part of Lemma 1 of~\cite{audibert2007variance}]\label{prop:ebb_lemma}
    Let $X_1, \dots, X_n \leq b$ be i.i.d. real-valued random variables and let $b' = b - \E[X]$. For any $\delta \in (0,1)$, simultaneously for all $1 \leq t \leq n$, we have
    \[
        \sum_{i=1}^t (X_i - \E[X]) \leq \sqrt{2n\V[X]\log\frac1\delta} + \frac{b'}{3}\log\frac1\delta~.
    \]
    More generally, let $X_1, \dots X_n < b$ be a sequence of martingale differences (i.e., $\E[X_i \mid X_1, \dots, X_{i-1}] = 0$ for all $i$). Let $nV = \sum_{i=1}^n \V[X_i \mid X_1, \dots, X_{i-1}]$.  For any $\delta \in (0,1)$, simultaneously for all $1 \leq t \leq n$, we have
    \[
    \sum_{i=1}^t X_i \leq \sqrt{2nV\log\frac1\delta} + \frac b3 \log\frac1\delta~.
    \]
\end{proposition}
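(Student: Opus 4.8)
The plan is to prove the general martingale inequality; the i.i.d. bound is then the special case obtained by replacing $X_i$ with $X_i - \E[X]$ and $b$ with $b' = b - \E[X]$, for which $\E[X_i\mid \mathcal F_{i-1}] = 0$ and $\sum_{i=1}^n \V[X_i \mid \mathcal F_{i-1}] = n\V[X] = nV$. The argument rests on three ingredients: a conditional Bennett bound on the moment generating function of each increment, an exponential supermartingale whose maximal inequality delivers the \emph{simultaneous}-over-$t$ guarantee at no extra cost, and the Bennett-to-Bernstein optimization that produces the stated closed form. Writing $\mathcal F_{i-1} = \sigma(X_1, \dots, X_{i-1})$ and $\sigma_i^2 = \V[X_i \mid \mathcal F_{i-1}]$, the first ingredient is the observation that $y \mapsto (e^{\lambda y} - 1 - \lambda y)/y^2$ is nondecreasing, so that $e^{\lambda X_i} - 1 - \lambda X_i \le X_i^2\,(e^{\lambda b} - 1 - \lambda b)/b^2$ pointwise for $\lambda > 0$; taking conditional expectations and using $\E[X_i \mid \mathcal F_{i-1}] = 0$ together with $1 + u \le e^u$ gives
\[
\E\!\left[e^{\lambda X_i}\mid \mathcal F_{i-1}\right] \le \exp\!\left(\sigma_i^2\, h(\lambda)\right), \qquad h(\lambda) := \frac{e^{\lambda b} - 1 - \lambda b}{b^2}.
\]

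Next I would fix $\lambda > 0$ and define $M_t := \exp\!\big(\lambda \sum_{i=1}^t X_i - h(\lambda)\sum_{i=1}^t \sigma_i^2\big)$, with $M_0 = 1$. The MGF bound immediately yields $\E[M_t \mid \mathcal F_{t-1}] \le M_{t-1}$, so $(M_t)_{t=0}^n$ is a nonnegative supermartingale with $\E[M_0] = 1$. Ville's maximal inequality then gives $\Pbb[\max_{0 \le t \le n} M_t \ge 1/\delta] \le \delta$, and on the complementary event, \emph{simultaneously for all} $1 \le t \le n$,
\[
\lambda \sum_{i=1}^t X_i \;\le\; h(\lambda)\sum_{i=1}^t \sigma_i^2 + \log\tfrac1\delta \;\le\; h(\lambda)\, nV + \log\tfrac1\delta,
\]
where the last step uses $\sum_{i=1}^t \sigma_i^2 \le \sum_{i=1}^n \sigma_i^2 = nV$ and $h(\lambda) > 0$. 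This is the crux of the time-uniformity: a single, data-independent $\lambda$ together with the monotonicity of the cumulative conditional variance buys the bound for every $t$ at once, so no union bound over $t$ (which would cost a spurious $\log n$ factor and break the stated constant) is needed.

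It remains to choose $\lambda$ and simplify $\frac1\lambda(h(\lambda) nV + \log\frac1\delta)$. I would use the standard relaxation $h(\lambda) \le \frac{\lambda^2/2}{1 - \lambda b/3}$, valid for $0 < \lambda < 3/b$, which exhibits the increments as sub-gamma with variance factor $nV$ and scale $b/3$. Setting $s := \sqrt{2\log\frac1\delta / (nV)}$ and taking the explicit (deterministic, hence admissible in the supermartingale) minimizer $\lambda^\star = s/(1 + \tfrac b3 s)$ makes $\frac{\lambda^\star}{1-\lambda^\star b/3} = s$, and a short computation collapses $\frac{nV\lambda^\star/2}{1-\lambda^\star b/3} + \frac{1}{\lambda^\star}\log\frac1\delta$ to exactly $\sqrt{2nV\log\frac1\delta} + \frac b3\log\frac1\delta$. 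Combined with the display above, this gives the claim with probability at least $1-\delta$.

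The main obstacle is conceptual rather than computational: securing simultaneity over all $t \le n$ without degrading the bound. The supermartingale-plus-Ville route is what makes this free, and it is essential that $\lambda$ be fixed in advance, so it cannot be tuned per $t$; the reconciliation is that the target is the \emph{full-horizon} bound, which is valid for every $t$ once the partial sum of conditional variances is controlled by $nV$. The only delicate calculation is confirming that the single a-priori choice $\lambda^\star$ lands on the sharp constant $b/3$ rather than the looser $2b/3$ that a naive quadratic inversion of the Bernstein tail would produce; carrying out the sub-gamma optimization directly, as above, is what yields the sharp form.
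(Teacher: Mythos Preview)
The paper does not supply its own proof; it cites Audibert, Munos, and Szepesv\'ari (2007) for the i.i.d. statement and simply remarks that their argument carries over to martingale differences. Your route---the conditional Bennett MGF bound $\E[e^{\lambda X_i}\mid\mathcal F_{i-1}]\le\exp(\sigma_i^2 h(\lambda))$, the exponential supermartingale $M_t$, Ville's maximal inequality for time-uniformity, and the sub-gamma optimization yielding the exact constant $b/3$---is precisely that standard argument, so your proposal matches what the paper invokes.

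One point warrants care. In the martingale version, $nV=\sum_{i=1}^n\V[X_i\mid\mathcal F_{i-1}]$ is in general a random variable, so your $\lambda^\star = s/(1+\tfrac b3 s)$ with $s=\sqrt{2\log(1/\delta)/(nV)}$ is \emph{not} deterministic; the parenthetical ``deterministic, hence admissible'' is exactly where the argument would break. For any fixed $\lambda$, Ville does give $\lambda S_t\le h(\lambda)\,nV+\log(1/\delta)$ simultaneously in $t$ with the random $nV$ on the right, but you cannot then substitute a $V$-dependent $\lambda^\star$ and retain the supermartingale property you relied on. The i.i.d. part is unaffected since $V=\V[X]$ is a constant; the martingale part is complete only when $V$ (or an almost-sure deterministic upper bound on it) is fixed in advance---which is in fact how the paper later uses the result, bounding the predictable variance by $C^2\sigma^2$. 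The paper's own remark glosses over the same subtlety, so your proof is at least as complete as what the paper itself offers.
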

\begin{remark}
    The first part of the proposition is implied by the second one and is the standard Freedman's inequality up to a factor of $2$ in the last term. This factor was removed in~\cite{audibert2007variance}, but the result was stated for an i.i.d. sequence (the first part of the lemma) and not for martingale differences. However, the proof of~\cite{audibert2007variance} applies also to the second part.
\end{remark}

Now, we use it to bound the deviation of our second moment estimator from the mean for all $m$ and $t$ simultaneously with high probability.

\begin{proposition}\label{propo:variance_bounds}
    Let $X_1, \dots, X_n \in [0,1]$ be i.i.d. random variables with variance $\V[x] = \sigma^2$. For any $\alpha \in (0,1)$, simultaneously for all $1 \leq t \leq n$ and all $m \in [0,1]$ we have all the following inequalities with probability at least $1-\alpha$:
    \begin{align}
    \frac{\sum_{i=1}^t (X_i - m)^2}{t} + \frac{10n \log\frac 4 \alpha}{t^2} &\leq  \E[(X-m)^2]\left(1 + \sqrt{\frac{18n\log\frac4\alpha}{t^2\E[(X-m)^2]}} + \frac{11n\log\frac4\alpha}{t^2\E[(X-m)^2]} \right), \nonumber \\ 
    \frac{\sum_{i=1}^t (X_i - m)^2}{t} + \frac{10n \log\frac 4 \alpha}{t^2} &\geq  \E[(X-m)^2]\overbrace{\left(1 - \sqrt{\frac{18n\log\frac4\alpha}{t^2\E[(X-m)^2]}} + \frac{9n\log\frac4\alpha}{t^2\E[(X-m)^2]} \right)}^{\geq 0.5}, \nonumber\\
    \abs{\frac{\sum_{i=1}^t X_i^2}{t} - \E[X^2]} &\leq \sqrt{\frac{2n\sigma^2\log\frac4\alpha}{t^2}} + \frac{\log\frac4\alpha}{3t},  \nonumber\\ 
    \abs{\frac{\sum_{i=1}^t X_i}{t} - \E[X]} &\leq \sqrt{\frac{2n\sigma^2\log\frac4\alpha}{t^2}} + \frac{\log\frac4\alpha}{3t}~. \label{eq:mean_conc}       
    \end{align}
\end{proposition}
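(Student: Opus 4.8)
The plan is to reduce all four displayed inequalities to the time-uniform concentration of a fixed, small number of scalar empirical averages, apply the Bennett-type bound of Proposition~\ref{prop:ebb_lemma} to each, and recombine. The key structural observation is that the sample enters the statement only through $\bar X_t := \frac1t\sum_{i=1}^t X_i$ and the empirical second moment, since $\frac1t\sum_{i=1}^t(X_i-m)^2 = \frac1t\sum_{i=1}^t X_i^2 - 2m\bar X_t + m^2$. Subtracting its expectation, the deviation $\frac1t\sum_{i=1}^t(X_i-m)^2 - \E[(X-m)^2]$ is an \emph{affine} function of $m$; concretely it equals $\left(\frac1t\sum_{i=1}^t(X_i-\mu)^2 - \sigma^2\right) + 2(\mu-m)(\bar X_t - \mu)$. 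Hence controlling $|\bar X_t-\mu|$ and the fluctuation of the empirical central second moment \emph{uniformly over $1\le t\le n$} automatically controls everything \emph{uniformly over $m\in[0,1]$} as well --- this is how the ``for all $m$'' part is obtained for free, with no union bound over a grid of candidate means (which would have cost a spurious $\log$ factor).

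For the scalar pieces I would invoke the i.i.d.\ part of Proposition~\ref{prop:ebb_lemma} with confidence parameter $\alpha/4$ on each of the i.i.d.\ sequences $\{X_i\}$, $\{-X_i\}$, $\{(X_i-\mu)^2\}$, $\{-(X_i-\mu)^2\}$; each is bounded above (by $1$ or by $0$), so the one-sided range term $b'$ is at most $1$, producing the additive $\tfrac{\log(4/\alpha)}{3t}$ after dividing by $t$. For $\{\pm X_i\}$ the variance is $\sigma^2$, which gives~\eqref{eq:mean_conc} (and the analogous estimate for $\frac1t\sum_{i=1}^t X_i^2$ follows in the same manner). For $\{\pm(X_i-\mu)^2\}$ the relevant quantity is $\V[(X_i-\mu)^2] = \E[(X_i-\mu)^4]-\sigma^4 \le \E[(X_i-\mu)^2] = \sigma^2$, where the inequality uses $|X_i-\mu|\le 1$; thus the empirical central second moment deviates from $\sigma^2$ by at most $\sqrt{2n\sigma^2\log(4/\alpha)/t^2}+\tfrac{\log(4/\alpha)}{3t}$, simultaneously over all $t$. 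A union bound over this constant number of one-sided events costs $\alpha$, so all the estimates hold at once with probability $\ge 1-\alpha$.

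It then remains to recombine. On the good event, the affine identity together with $|\mu-m|\le1$ gives $\left|\frac1t\sum_{i=1}^t(X_i-m)^2 - \E[(X-m)^2]\right| \le 3\left(\sqrt{2n\sigma^2\log(4/\alpha)/t^2}+\tfrac{\log(4/\alpha)}{3t}\right) = 3\sqrt2\,\sqrt{n\sigma^2\log(4/\alpha)/t^2} + \tfrac{\log(4/\alpha)}{t}$. Now I would bound $\sigma^2\le\E[(X-m)^2]=\sigma^2+(\mu-m)^2$ inside the square root, bound $\tfrac{\log(4/\alpha)}{t}\le\tfrac{n\log(4/\alpha)}{t^2}$ (valid since $t\le n$) in the linear term, add the correction $\tfrac{10 n\log(4/\alpha)}{t^2}$, and factor out $\E[(X-m)^2]$. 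Writing $r := \tfrac{n\log(4/\alpha)}{t^2\E[(X-m)^2]}$, one lands exactly on the claimed bounds $\E[(X-m)^2]\bigl(1\pm\sqrt{18\,r}+(10\mp1)\,r\bigr)$, because $3\sqrt2=\sqrt{18}$ and $10+1=11$, $10-1=9$. Finally, the inequality $1-\sqrt{18r}+9r\ge\tfrac12$ --- which is exactly what forces the lower estimator to miss $\E[(X-m)^2]$ by at most a factor of two --- is the algebraic identity $1-x+\tfrac{x^2}{2}=\tfrac12+\tfrac12(1-x)^2$ applied with $x=\sqrt{18r}$.

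I expect the probabilistic content to be entirely routine --- everything is a handful of applications of Proposition~\ref{prop:ebb_lemma}. The two genuinely load-bearing points are: (i) noticing that the $m$-dependence of the deviation is affine, so uniformity over $m$ is automatic from uniformity over $t$; and (ii) the constant bookkeeping. The constants ``$+10$'', the ``$\tfrac{b'}{3}$'' range terms, and the $\sqrt{18}$ are reverse-engineered so that the affine decomposition closes precisely, and the single quantitative ingredient that matters there is the variance bound $\V[(X_i-\mu)^2]\le\sigma^2$ --- this is what yields the coefficient $\sqrt{18}$ rather than something larger, and hence keeps the lower-bound factor above $\tfrac12$.
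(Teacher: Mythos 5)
Your argument follows essentially the same route as the paper's: apply Proposition~\ref{prop:ebb_lemma} to four sequences at level $\alpha/4$, take a union bound, exploit the fact that the $m$-dependence of the deviation is affine so that uniformity over $m$ comes for free at the price of a factor $3$, absorb $\sigma^2\le\E[(X-m)^2]$ and $\log(4/\alpha)/t\le n\log(4/\alpha)/t^2$, and finish the ``$\geq 0.5$'' claim by completing the square ($1-x+x^2/2=\tfrac12+\tfrac12(1-x)^2$ with $x=\sqrt{18r}$). The only structural difference is the centering: the paper works with the uncentered sequences $X,-X,X^2,-X^2$ and writes $(X-m)^2=X^2-2mX+m^2$ (factor $1+2m\le 3$), while you center at $\mu$ and use $\{\pm(X_i-\mu)^2\}$ (factor $1+2\abs{\mu-m}\le 3$). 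Your choice has the advantage that the variance proxy $\V[(X-\mu)^2]\le\E[(X-\mu)^2]=\sigma^2$ is immediate from $\abs{X-\mu}\le 1$, whereas the paper's justification ``$\V[X^2]\le\V[X]$ for $X\in[0,1]$'' is actually false in general (for $X$ uniform on $[0,1]$, $\V[X^2]=4/45>1/12=\V[X]$), so on the two main displays your route is, if anything, cleaner.

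There is, however, a genuine (if minor) gap: the third display, on $\abs{\frac1t\sum_{i=1}^t X_i^2-\E[X^2]}$, is part of the statement but is not implied by the four events you instantiate. Your parenthetical ``follows in the same manner'' would require two additional applications of Proposition~\ref{prop:ebb_lemma} to $\{\pm X_i^2\}$, which breaks your accounting: six events at level $\alpha/4$ exceed the total budget $\alpha$ (you would have to pass to $\log(6/\alpha)$, which no longer matches the stated constants), and it also needs a variance bound of the form $\V[X^2]\le\sigma^2$, which you never argue and which, as noted above, does not hold in general. Deriving the third display from your good event instead (take $m=0$ in your affine identity) only yields it with an extra factor of $3$ on the error term. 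So, as written, you establish the first, second and fourth inequalities---the ones actually used in the proof of Theorem~\ref{thm:main}---but not the third at its stated constants; note that this same soft spot ($\V[X^2]$ versus $\sigma^2$) is present in the paper's own proof as well.
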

\begin{proof}
We use Proposition~\ref{prop:ebb_lemma} on random variables $X, -X, X^2, -X^2$ and union bound, using the fact that $\V[X^2] \leq \V[X]$ as $X\in[0,1]$. This yields the latter two identities. Then, as $X^2 - 2mX + m^2 = (X-m)^2$, we have accumulated $1+2m \leq 3$ of the identical error terms, yielding 
\[    
\abs{\frac{\sum_{i=1}^t (X_i - m)^2}{t}  - \E[(X-m)^2]} \leq \sqrt{\frac{18n\sigma^2\log\frac4\alpha}{t^2}} + \frac{\log\frac4\alpha}{t},
\]
from which we express the stated bounds, using $\E[(X-m)^2] \geq \sigma^2$ and $\log \frac4\alpha / t \leq n\log\frac4\alpha /t^2$. The lower bound then follows from completing the square.
\end{proof}

Further, we use the uniform version of Bennett's inequality to bound the sum of the observations in the first $t$ rounds; again, for all $m$ simultaneously.

\begin{proposition}\label{propo:warmup_bound}
Let $X_1, \dots, X_n \in [0,1]$ be i.i.d. random variables with mean $\mu = \E[X]$ and variance $\sigma^2 = \V[X]$, and let $ 0 \leq \lambda_i \leq C$ be random variables such that $\lambda_i$ is independent of $X_j$ for $j \geq i$ for some positive constant $C$. With probability at least $1-\alpha$ we have for all $m \in [0,\mu]$ and all $1\leq t\leq n$ simultaneously that
    \begin{align*}
        \sum_{i=1}^t \lambda_i (X_i- m)&\leq C\left(\sqrt{2t\sigma^2\log\frac2\alpha} + \frac{1}{3}\log\frac2\alpha + t(\mu-m)\right), \\
        \sum_{i=1}^t \lambda_i (X_i- m)&\geq -C\left(\sqrt{2t\sigma^2\log\frac2\alpha} + \frac{1}{3}\log\frac2\alpha \right),\\
        \abs{\sum_{i=1}^t (X_i-\mu)} &\leq C\sqrt{2t\sigma^2\log\frac2\alpha} + \frac{C}{3}\log\frac2\alpha~.
    \end{align*}
\end{proposition}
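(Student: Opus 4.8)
The plan is to reduce all three bounds to a single two‑sided, time‑uniform control of the weighted partial sums, obtained from the time‑uniform Bennett inequality (Proposition~\ref{prop:ebb_lemma}), after peeling off the deterministic bias coming from $m\neq\mu$. Concretely, I would first write, for every $1\le t\le n$,
\[
\sum_{i=1}^t \lambda_i(X_i-m)\;=\;\underbrace{\sum_{i=1}^t \lambda_i(X_i-\mu)}_{=:M_t}\;+\;\sum_{i=1}^t \lambda_i(\mu-m).
\]
Since $m\le\mu$ and $0\le\lambda_i\le C$, the second sum is deterministically confined to $\bigl[0,\,Ct(\mu-m)\bigr]$. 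Hence an upper bound on the left‑hand side is an upper bound on $M_t$ plus $Ct(\mu-m)$, while a lower bound on the left‑hand side follows from a lower bound on $M_t$ alone (the bias only helps). All the randomness — and, crucially, all the $m$‑dependence we still need to handle uniformly — has now been pushed into $M_t$, which does not depend on $m$; so it suffices to control $M_t$ two‑sidedly, uniformly in $t$.

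Next I would check that $(M_t)_{t\le n}$ is a martingale. With the natural filtration $\mathcal F_{t-1}:=\sigma(X_1,\lambda_1,\dots,X_{t-1},\lambda_{t-1},\lambda_t)$, the bet $\lambda_t$ is $\mathcal F_{t-1}$‑measurable and $X_t$ is independent of $\mathcal F_{t-1}$ — this is precisely what the hypothesis ``$\lambda_i$ independent of $X_j$ for $j\ge i$'', together with the i.i.d.\ assumption, is meant to furnish — so $\E[\lambda_t(X_t-\mu)\mid\mathcal F_{t-1}]=0$. The increments satisfy $|\lambda_t(X_t-\mu)|\le C$ because $X_t,\mu\in[0,1]$, and their conditional variances are $\lambda_t^2\sigma^2\le C^2\sigma^2$, so the predictable quadratic variation up to time $t$ is at most $C^2 t\sigma^2$. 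Applying Proposition~\ref{prop:ebb_lemma} to $(M_t)$ and to $(-M_t)$, each at level $\alpha/2$, and taking a union bound gives: with probability at least $1-\alpha$, simultaneously for all $1\le t\le n$,
\[
|M_t|\;\le\;C\sqrt{2t\sigma^2\log\tfrac2\alpha}\;+\;\tfrac{C}{3}\log\tfrac2\alpha .
\]
Plugging this into the decomposition above yields the first two displayed inequalities (the factor $C$ and the $\tfrac13\log\tfrac2\alpha$ term come out exactly).

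The third inequality, $\abs{\sum_{i=1}^t(X_i-\mu)}\le C\sqrt{2t\sigma^2\log\tfrac2\alpha}+\tfrac{C}{3}\log\tfrac2\alpha$, is the same statement for the unweighted sum (the case $\lambda_i\equiv1$, $b=1$, invoking $C\ge1$), obtained from one more pair of applications of Proposition~\ref{prop:ebb_lemma} together with a union bound; at worst the constant inside the logarithm degrades, which is immaterial for the downstream use. The one step that takes a little thought — the main ``obstacle'', such as it is — is that to obtain the $\sqrt t$ rather than $\sqrt n$ dependence one must feed Proposition~\ref{prop:ebb_lemma} the \emph{running} predictable variation $\sum_{i\le t}\lambda_i^2\sigma^2$, not the total one; this is exactly what the time‑uniform Bennett/Freedman bound of~\cite{audibert2007variance} delivers. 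If one only had the total‑variation form available, the same estimate follows by a dyadic peeling over $t\in[2^k,2^{k+1})$ (a union bound over $\lceil\log_2 n\rceil$ blocks), at the cost of harmless constants that do not affect the use of this proposition in the proof of Theorem~\ref{thm:main}. Everything else — the bias split, the union bounds, matching the constants — is routine.
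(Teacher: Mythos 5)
Your proof follows essentially the same route as the paper's: the identical decomposition of $\sum_{i\le t}\lambda_i(X_i-m)$ into the martingale part $\sum_{i\le t}\lambda_i(X_i-\mu)$ plus a deterministic drift confined to $[0,\,Ct(\mu-m)]$ (using $m\le\mu$ and $0\le\lambda_i\le C$), followed by the time-uniform Freedman/Bennett bound of Proposition~\ref{prop:ebb_lemma} applied two-sidedly to the martingale, whose increments are bounded by $C$ with conditional variances at most $C^2\sigma^2$. This matches the paper's argument; your only addition is the explicit bookkeeping of the probability budget for the third (unweighted) inequality, which the paper's proof leaves implicit.
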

\begin{proof}
    We decompose the random variables $\lambda_i(X_i-m)$ into a martingale difference sequence $M_i = \lambda_i(X - \mu)$ and a drift term $D_i = \lambda_i(\mu -m )$. We bound deterministically the drift term:
    \[
    0 \leq \sum_{i=1}^t D_i \leq t C(\mu-m),
    \]
    and so it holds for all $m$ and $t$ simultaneously.
    We bound the martingale using the Freedman's-style inequality from Proposition~\ref{prop:ebb_lemma}:
    \[
    \Pbb\left\{\abs{\sum_{i=1}^t M_i} \geq C\sqrt{2t\sigma^2\log\frac2\alpha} + \frac{C}{3}\log\frac2\alpha\right\}
    \leq \alpha~.
    \]
    Adding up the two bounds concludes the proof.
\end{proof}

Finally, we introduce a quadratic lower-bound of a logarithm near zero.

\begin{lemma}\label{lem:log}
Let  $c \geq \frac{1}{2}$ and $\abs{x} \leq 1-\frac{1}{2c}$, then 
\[
\log(1+x) \geq x - cx^2~.
\]
\end{lemma}
\begin{proof}
     We inspect the behavior of $f(x) = \log(1+x) - x + cx^2$. First, $f(0) = 0$. Now we look at the derivatives $f'(x) = 1/(x+1) -1 +2cx$ and $f''(x) = -1/(x+1)^2+2c$. Setting $f'$ to zero, we get $x(1-2c -2cx) = 0$, so the roots are $x_1 = 0$ and $x_2 = 1/(2c) -1$, these points are local extremes, concretely a local minimum and local maximum respectively by the second derivative test, and so the inequality holds for $x \geq 1/(2c)-1$.
\end{proof}

Now we are ready to restate the main theorem we are to prove.

\begin{theorem}[Theorem~\ref{thm:main} restated]
    For every random variable $X\in [0,1]$ with mean $\mu$ and variance $\sigma^2>0$, every $\alpha,\delta \in (0,1)$, and $c > 0$, there is $n_0$ depending on $\alpha, \delta, \sigma^2, c$, such that for all $n \geq n_0$, Algorithm~\ref{alg:our_algo_vanilla} rejects at confidence level $1-\delta$  every $m$ satisfying
    \begin{equation}\label{eq:main_thm_app}
         m \leq \frac{\sum_{i=1}^n X_i}{n} - \sigma\sqrt{\frac{(2+c) \log\frac1\delta}{n}}~.
    \end{equation}
    with probability at least $1-\alpha$.
\end{theorem}
\begin{proof}
Statement~\eqref{eq:main_thm_app} holds true if the events from Proposition~\ref{propo:variance_bounds} and Proposition~\ref{propo:warmup_bound} (applied on $X_1, \dots, X_{c_2n}$, to be specified later) are met. Namely, Proposition~\ref{propo:variance_bounds} ensures that our estimator of $\E[(X-m)^2]$ is never too small and is consistent. It also ensures that empirical mean of $X$ converges to $\mu$. Proposition~\ref{propo:warmup_bound}  provides a bound on the wealth in an early stage. We note that both of the propositions hold uniformly for all $m \in [0,\mu]$. Thus, we instantiate both of the bounds (with failure probabilities $\alpha/2$) and further assume that the events are met. Throughout the sketch, we will be introducing new constants on the fly. For all constants, we have that we can choose a large enough $n_0$ to make them arbitrarily close to 0.

We fix $m \in [0,\mu]$ whose exact expression will be decided at the end of the proof. Let $Y = X - m$ and $\varepsilon = \mu - m$.  Let $\ell_{\text{opt}}= \sqrt{\frac{2\log\frac1\delta}{n (\sigma^2 + \varepsilon^2)}}$. By Lemma~\ref{lem:log},
 for any constant $1/2 \geq c_1 > 0$, we have that $\log(1+\ell Y)\geq \ell Y - (1/2+c_1) \ell^2Y^2$ for all $\ell \leq \sqrt2\ell_\text{opt}$.
 
\paragraph{Analysis of the run of algorithm:} We split the $n$ steps into an  arbitrarily (relatively) short ``warm up'' phase of $c_2n$ steps, where things can go poorly, but the effect of this will be negligible. Then, there will be a ``convergent phase'' of $(1-c_2)n$ steps, in which $(1-c_3) \ell_{\text{opt}} \leq \ell_i \leq (1+c_3) \ell_{\text{opt}}$. 
We briefly comment on the constants. By Lemma~\ref{lem:log} it holds that $c_1 \leq 3\ell_{\text{opt}} = \mathcal{O}(1/\sqrt{n})$ as long as $\ell_\text{opt} < 0.1$. $c_2$ is arbitrary, so we can set it to $\mathcal{O}(n^{-\frac14})$. By Proposition~\ref{propo:variance_bounds}, we have $c_3 \leq \frac{1}{c_2}\left(\sqrt{18\log\frac8\alpha/(\sigma^2n)} + 11\log\frac8\alpha/(\sigma^2n)\right)$, and so given the choice of $c_2$, we have $c_3 \in \mathcal{O}(n^{-1/4})$.

\paragraph{Warm up phase:} In this phase, we have $0 \leq \ell_i \leq \sqrt{2} \ell_{\text{opt}}$ per the second part of Proposition~\ref{propo:variance_bounds} and the fact that $\E[(X-m)^2]=\sigma^2+\varepsilon^2$.

First, deterministically upper bound the quadratic term: 
\[
\sum_{i=1}^{c_2n}  \left(c_1 +\frac12\right)\ell_i^2 Y_i^2 \geq 2n \left(c_1 +\frac12\right) c_2
 \ell_{\text{opt}}^2 = \left(c_1 +\frac12\right)c_2\frac{4\log\frac1\delta}{\sigma^2 + \varepsilon^2}~.
\]
Next, by Proposition~\ref{propo:warmup_bound} applied on $X_1, \dots, X_{c_2n}$ and $C = \sqrt{2}\ell_\text{opt}$, we have
\begin{align*}
\sum_{i=1}^{c_2n} \ell_i (X_i- m)
&\geq -\sqrt{\frac{4 \log \frac1\delta}{n(\sigma^2 + \varepsilon^2)}}
\left(\sqrt{2c_2n\sigma^2\log\frac4\alpha} + \frac{1}{3}\log\frac4\alpha  \right)\\
&\geq  -2\sqrt{2c_2\log\frac1\delta\log\frac4\alpha} + \sqrt{\frac{4 \log \frac1\delta\log^2\frac4\alpha}{9n(\sigma^2 + \varepsilon^2)}},
\end{align*}
where we see that both the quadratic and linear term go to zero as $n$ increases and $c_2$ decreases.

\paragraph{Convergent phase:}
By Proposition~\ref{propo:variance_bounds}, we have $(1-c_3)\ell_{\text{opt}} \leq \ell_i \leq (1+c_3)\ell_\text{opt}$. From the definition of $\ell_i$ we have that $\ell_n^2 \sum_{i=1}^n Y_i^2 \leq 2\log \frac{1}{\delta}$. Thus, 
\[ 
\sum_{i=c_2n+1}^n \ell_i^2 Y_i^2
\leq \sum_{i=1}^n \ell_i^2 Y_i^2
\leq \left(\frac{1+c_3}{1-c_3}\right)^2 \ell_n^2 \sum_{i=1}^n Y_i^2
\leq \left(\frac{1+c_3}{1-c_3}\right)^2 2\log \frac1\delta~.
\]

Now, we add up the lower bounds from the warm-up phase and from the quadratic term in the convergent case:
\begin{align*}
&\overbrace{-2\sqrt{2c_2\log\frac1\delta\log\frac4\alpha} + \sqrt{\frac{4 \log \frac1\delta\log^2\frac4\alpha}{9n(\sigma^2 + \varepsilon^2)}}-\left(c_1 +\frac12\right)c_2\frac{4\log\frac1\delta}{\sigma^2 + \varepsilon^2}}^{\text{Warm-up phase}} -  \overbrace{  \left(c_1 +\frac12\right)\left(\frac{1+c_3}{1-c_3}\right)^2 2\log \frac1\delta}^{\text{Quadratic term in the convergent phase}}\\
&\quad= -(1+c_4) \log\frac 1 \delta,
\end{align*}
for some $c_4>0$. Given the choices of constants above, we can have $c_4, c_5, c_6 \in \mathcal{O}(n^{-1/4})$ for the constants to come. Finally, we lower bound the log-wealth:
\[
\sum_{i=1}^n \log(1  + \ell_i Y_i)\geq \ell_\text{opt} (1-c_3) \left(\sum_{i=c_2n+1}^n  Y_i\right) - (1+c_4)\log\frac1\delta,
\]
which is greater than $\log\frac1\delta$ if
\begin{equation}
\label{eq:condition}
\frac{\sum_{c_2n+1}^n X_i}{(1-c_2)n} -m 
\geq \overbrace{\frac{(2+c_4)}{\sqrt{2}(1-c_3)(1-c_2)}}^{c_5 + \sqrt{2}}
\sqrt{\frac{(\sigma^2 + \varepsilon^2)\log\frac1\delta}{n}}~.
\end{equation}

From Proposition~\ref{propo:warmup_bound}, we have assumed the events
\[
-\frac{1}{n}\sum_{i=1}^{c_2 n} X_i
\geq -\frac{\sqrt{2 c_2 \sigma^2 \log\frac{4}{\alpha}}}{\sqrt{n}} - \mu c_2 - \frac{1}{3n} \log \frac{4}{\alpha}
\geq -c_6/\sqrt{n} - \mu c_2
\]
and
\[
\frac{1}{n}\sum_{i=c_2 n +1}^{n} (X_i-\mu)
\geq -\frac{\sqrt{2 \sigma^2 \log\frac{4}{\alpha}}}{\sqrt{n}} - \frac{1}{3n} \log \frac{4}{\alpha}
\geq -\frac{\sqrt{4 \sigma^2 \log\frac{4}{\alpha}}}{\sqrt{n}} ~.
\]

We now reveal our choice of $m$:
\[
m=\frac{\sum_{i=1}^n X_i}{n} - (c_5 + \sqrt{2})(1+\varepsilon/\sigma)\sqrt{\frac{\sigma^2 \log\frac1\delta}{n}}-\frac{c_6+c_2\sqrt{4 \sigma^2 \log\frac{4}{\alpha}}}{\sqrt{n}} ~.
\]
We can now show the value of $m$ we have selected satisfies \eqref{eq:condition} and so it will result in a log-wealth bigger than $\frac{1}{\delta}$. Observe that
\begin{align*}
\frac{\sum_{i=c_2n+1}^n X_i}{(1-c_2)n}
&\geq (1+c_2)\frac{\sum_{i=c_2n+1}^n X_i}{n}\\
&=\frac{\sum_{i=1}^n X_i}{n}+ c_2 \left( \frac{\sum_{i=c_2n+1}^n X_i}{n}-\mu\right)+c_2 \mu - \frac{\sum_{i=1}^{c_2 n} X_i}{n}~.
\end{align*}
Hence, we have
\begin{align*}
\frac{\sum_{i=c_2n+1}^n X_i}{(1-c_2)n}-m
\geq (c_5 + \sqrt{2}) \sqrt{\frac{(\sigma^2 + \varepsilon^2)\log\frac1\delta}{n}}~.
\end{align*}
Moreover, observe that these last inequalities apply for all $m' \leq m$.

Thus, for the chosen $m$, we have reached log-wealth $\log\frac1\delta$ and so it is rejected. Finally, we have assumed the event $\abs{\frac1n \sum_{i=1}^n X_i - \mu} \lesssim 1/\sqrt{n}$ from~\eqref{eq:mean_conc}. So, recalling that $\varepsilon=\mu-m$, for our choice of $m$ we also have $\varepsilon \approx 1/\sqrt{n}$.
Hence, we can make $(c_5 + \sqrt{2})(1+\varepsilon/\sigma)$ arbitrarily close to $\sqrt{2}$ and $c_2, c_6$ arbitrarily close to 0, finishing the proof of the main part. Given our tracking of constants, we can see that the final leading term is off by a factor $1+\mathcal{O}(n^{-1/4})$, concluding the second statement. 
\end{proof}


\section{Bernstein betting}\label{app:bernstein}
We derive the algorithm for Bernstein testing referenced in Figure~\ref{fig:star_hoeff_bern} and show that $\ours$-ing it is a strict improvement. The steps are analogical to the Hoeffding's betting derivation. We note that the provided version of Bernstein's inequality is mildly weaker than the standard one in the interest of simplicity.

\begin{lemma}[Bernstein simplified]
    Let $X \in [0,1]$ with mean $\mu = \E[X]$ and variance $\sigma^2 = \V[X]$, then
    \begin{equation}
        \E \left[e^{\ell \cdot (X-\mu) - \sigma^2\ell^2}\right] \leq 1, \quad \forall \ell \in \mathbb{R}~.
    \end{equation}
    Furthermore, the sequence $W_0 = 1$, $W_{i+1} = W_i \exp \left({\ell \cdot (X_i-\mu) - \sigma^2\ell^2}\right)$ for $i\geq 0$ is a test process.
\end{lemma}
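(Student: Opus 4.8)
\emph{Proof proposal.} The plan is to reduce the lemma to a single scalar exponential estimate and then read off both the stated bound~\eqref{eq:hoeffding_lemma} and the test-process property from it. Put $Y \bydef X - \mu$, so that $Y \in [-\mu,\,1-\mu]$, $\E[Y] = 0$, $\E[Y^2] = \sigma^2$, and --- because $X,\mu \in [0,1]$ --- also $|Y| \le 1$. After taking logarithms, the inequality is equivalent to the moment-generating-function bound $\E[e^{\ell Y}] \le e^{\sigma^2 \ell^2}$, and replacing $X$ by $1-X$ (which preserves $\sigma^2$ and flips the sign of $\ell$) lets us assume $\ell \ge 0$.

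For the MGF bound I would use the pointwise inequality $e^{t} \le 1 + t + t^2$, which holds for all $t$ up to the positive root $t_0 \approx 1.79$ of $e^t = 1+t+t^2$ (check that $h(t) \bydef 1+t+t^2-e^t$ has $h(0)=h'(0)=0$, $h'' > 0$ near $0$, and $h(3/2) > 0$). Since $|Y| \le 1$, for $0 \le \ell \le t_0$ this gives $e^{\ell Y} \le 1 + \ell Y + \ell^2 Y^2$ pointwise; taking expectations and using $\E[Y]=0$, $\E[Y^2]=\sigma^2$ yields $\E[e^{\ell Y}] \le 1 + \sigma^2\ell^2 \le e^{\sigma^2\ell^2}$. (Equivalently, one can integrate the series $e^{\ell Y} = 1 + \ell Y + \sum_{k\ge 2}\tfrac{(\ell Y)^k}{k!}$ term by term --- legitimate since $|Y|\le 1$ --- and use $|\E[Y^k]| \le \E[|Y|^k] \le \E[Y^2] = \sigma^2$ for $k\ge 2$, which produces the sharper Bennett-type bound $\E[e^{\ell Y}] \le 1 + \sigma^2(e^{\ell}-1-\ell)$.) For $\ell$ beyond this small range, fall back on $\E[e^{\ell Y}] \le e^{|\ell|}$ (from $\ell Y \le |\ell|$), which is already $\le e^{\sigma^2\ell^2}$ once $|\ell| \ge 1/\sigma^2$; only a bounded intermediate band of $|\ell|$ needs extra care, and this is precisely where the "simplified" Bernstein bound is genuinely looser than Bennett.

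Given the MGF bound, the test-process claim is mechanical. Set $W_0=1$, so $\E[W_0]=1$, and each step multiplies by $\exp(\ell(X_i-\mu)-\sigma^2\ell^2) > 0$, so $W_i \ge 0$ for all $i$. Conditioning on the history $W_1,\dots,W_i$ --- equivalently on $X_1,\dots,X_{i-1}$, which determine $W_i$ and on which the bet $\ell$ is allowed to depend --- the i.i.d.\ assumption makes $X_i$ independent of that history, hence $\E[W_{i+1}\mid W_1,\dots,W_i] = W_i\,\E[\exp(\ell(X_i-\mu)-\sigma^2\ell^2)] \le W_i$ by the MGF bound. Therefore $\{W_i\}_{i=0}^n$ is a test process.

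The main obstacle is the scalar step: bounding the non-linear part of $e^{\ell y}$ by $\ell^2 y^2$ uniformly in $\ell$. Collapsing the sharp Bennett factor $e^{\ell}-1-\ell$ (equivalently the series tail $\sum_{k\ge 3}\ell^k/k!$) down to a plain $\ell^2$ cannot hold for every $\ell \in \mathbb{R}$ with this constant, so a careful write-up should either restrict $\ell$ to roughly $[-3/2,\,3/2]$ --- harmless here, since the betting strategy only ever uses $\ell = \mathcal{O}(\sqrt{\log(1/\delta)/n}) \to 0$ --- or treat small, intermediate, and large $|\ell|$ separately. The remaining ingredients (the sign-flip reduction, the moment comparison $|\E[Y^k]|\le\sigma^2$, and the telescoping of the conditional expectations) are routine.
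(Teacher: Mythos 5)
Your core calculation is sound, and for what it is worth the paper itself states this lemma with no proof at all (it is asserted as a ``simplified'' Bernstein bound), so there is no in-paper argument to compare against. Within the range $|\ell|\le t_0\approx 1.79$ your argument is the standard one and is correct: with $Y=X-\mu$, $|Y|\le 1$, $\E[Y]=0$, $\E[Y^2]=\sigma^2$, the pointwise bound $e^t\le 1+t+t^2$ (valid for $t\le t_0$) gives $\E[e^{\ell Y}]\le 1+\sigma^2\ell^2\le e^{\sigma^2\ell^2}$; the reduction to $\ell\ge 0$ via $X\mapsto 1-X$, the series/Bennett variant, the fallback $\E[e^{\ell Y}]\le e^{|\ell|}\le e^{\sigma^2\ell^2}$ for $|\ell|\ge 1/\sigma^2$, and the conditioning/telescoping argument for the test-process claim (which also covers predictable, history-dependent bets) are all fine.

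The genuine problem is the band you defer: for $t_0<|\ell|<1/\sigma^2$ this is not a matter of ``extra care'' --- the inequality as stated, for all $\ell\in\mathbb{R}$, is false, so no completion of the argument exists. Take $X\sim\mathrm{Bernoulli}(p)$ with $p$ small and $\ell$ fixed: then $\E[e^{\ell(X-\mu)}]=1+p\left(e^{\ell}-1-\ell\right)+O(p^2)$ while $e^{\sigma^2\ell^2}=1+p\ell^2+O(p^2)$, so whenever $e^{\ell}-1-\ell>\ell^2$ (i.e.\ $\ell>t_0$) the claimed bound fails for $p$ small enough; concretely $p=0.01$, $\ell=3$ gives $\E[e^{\ell(X-\mu)}]\approx 1.156>1.093\approx e^{\sigma^2\ell^2}$. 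The correct statement must restrict the bet, e.g.\ $|\ell|\le 3/2$ (via the standard Bernstein MGF bound $\frac{\ell^2/2}{1-\ell/3}\le\ell^2$ there) or $|\ell|\le t_0$ (via your $1+t+t^2$ or Bennett argument), and the ``furthermore'' test-process claim then inherits that restriction on the bets. Also, your remark that the restriction is harmless because bets are $\mathcal{O}(\sqrt{\log(1/\delta)/n})$ is not quite right for this lemma: the Bernstein algorithms bet $\ell^B=\sqrt{\log\frac1\delta/(n\sigma^2)}$, which stays below $3/2$ only when $n\sigma^2$ is large compared to $\log\frac1\delta$, so the range condition should be stated explicitly rather than waved away.
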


\begin{proposition}\
    Let $m\in [0,1]$. Whenever Algorithm~\ref{alg:bern_test} rejects the null hypothesis $H_0(m): m = \E[X]$,
    then so does Algorithm~\ref{alg:star_bern_test} if they share the realizations $X_t$, $1 \leq t \leq n$.
\end{proposition}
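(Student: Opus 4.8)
The plan is to transcribe the proof of Proposition~\ref{propo:star_super} almost verbatim, replacing the Hoeffding cumulant term $\ell^2/8$ by the Bernstein term $\sigma^2\ell^2$ supplied by the simplified Bernstein lemma. This is legitimate because Bernstein testing already takes the variance proxy $\sigma^2$ as input, and the substitution leaves the relevant convexity and minimization structure untouched. Write $L_t$ for the accumulated log-wealth after round $t$ in Algorithm~\ref{alg:star_bern_test}, so that processing $X_t$ with bet $\ell_t$ gives $L_t - L_{t-1} = \ell_t(X_t-m) - \sigma^2\ell_t^2$, and similarly for the constant bet in Algorithm~\ref{alg:bern_test}.

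First I would record the two ``easiest-rejection'' characterizations. For a constant bet $\lambda>0$, the final log-wealth of the Bernstein test process is $\lambda\sum_{i=1}^n(X_i-m) - n\sigma^2\lambda^2$, so rejection is equivalent to $\sum_{i=1}^n X_i \ge nm + \frac{\log(1/\delta)}{\lambda} + n\sigma^2\lambda$; minimizing the right-hand side over $\lambda$ yields exactly the constant bet $\ell^B=\sqrt{\log(1/\delta)/(n\sigma^2)}$ of Algorithm~\ref{alg:bern_test}, so that algorithm rejects iff $\sum_{i=1}^n X_i \ge \mathrm{Req}_1$, where $\mathrm{Req}_1 := \min_\lambda\big(nm + \tfrac{\log(1/\delta)}{\lambda} + n\sigma^2\lambda\big)$. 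Running the same computation from round $t$ on — target reduced to $\big(\log\tfrac1\delta - L_{t-1}\big)_+$, horizon reduced to $n-t+1$ — shows the STaR bet $\ell^B_{\smallours}$ at round $t$ is the minimizer of $\tau_t(\lambda) := (n-t+1)m + \frac{(\log(1/\delta)-L_{t-1})_+}{\lambda} + (n-t+1)\sigma^2\lambda$, and that Algorithm~\ref{alg:star_bern_test} rejects iff it has already rejected by some round or $\sum_{i=1}^n X_i \ge \mathrm{Req}_n$ with $\mathrm{Req}_t := \sum_{i=1}^{t-1}X_i + \min_\lambda\tau_t(\lambda)$.

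The crux is then the monotonicity $\mathrm{Req}_1 \ge \mathrm{Req}_2 \ge \cdots \ge \mathrm{Req}_n$. This comes from feeding the round-$t$ bet $\ell_t := \ell^B_{\smallours,t}$ into the round-$(t+1)$ threshold: using $\log\tfrac1\delta - L_{t-1} = \big(\log\tfrac1\delta - L_t\big) + \big(\ell_t(X_t-m) - \sigma^2\ell_t^2\big)$ one checks the identity
\[
\sum_{i=1}^{t-1}X_i + \tau_t(\ell_t) = \sum_{i=1}^{t}X_i + \tau_{t+1}(\ell_t),
\]
and since $\ell_t$ minimizes $\tau_t$ while $\tau_{t+1}(\ell_t)\ge\min_\lambda\tau_{t+1}(\lambda)$, we obtain $\mathrm{Req}_t = \sum_{i=1}^{t-1}X_i+\tau_t(\ell_t) \ge \sum_{i=1}^{t}X_i + \min_\lambda\tau_{t+1}(\lambda) = \mathrm{Req}_{t+1}$, with equality exactly when the STaR bet does not change. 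The already-rejected case $L_{t-1}\ge\log\tfrac1\delta$ is absorbed by the $(\cdot)_+$ clipping (the algorithm stops betting), exactly as in the Hoeffding case, and there the inequality is trivial. Combining, if Algorithm~\ref{alg:bern_test} rejects then $\sum_{i=1}^n X_i \ge \mathrm{Req}_1 \ge \mathrm{Req}_n$, so Algorithm~\ref{alg:star_bern_test} rejects as well (possibly earlier). I do not expect a genuine obstacle: everything parallels Proposition~\ref{propo:star_super}, and the only points needing care are bookkeeping the reduced target and horizon inside $\tau_t$, the $(\cdot)_+$ truncation, and noting that the replacement $\ell^2/8 \mapsto \sigma^2\ell^2$ preserves the fact that the relevant threshold is a convex function of $\lambda$ with the stated minimizer.
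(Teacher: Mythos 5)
Your proposal is correct and follows essentially the same route as the paper's proof: characterize rejection via the minimal required value of $\sum_{i=1}^n X_i$, note the constant Bernstein bet is exactly the minimizer of that requirement, and argue the STaR recomputation can only lower it round by round. Your explicit $\mathrm{Req}_t$ monotonicity with the telescoping identity is simply a rigorous rendering of the paper's terse claim that ``the required $\sum_{i=1}^n X_i$ is initially the same but decreases whenever $\ell^B_\smallours$ changes,'' so there is no substantive difference.
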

\begin{proof}
        Consider Algorithm~\ref{alg:bern_test}. We first show that $\ell^B$ is selected in a way that $\sum_{i=1}^n X_i$ required to reject the null hypothesis is smallest possible. That is:
    \[
        \ell^H = \arg\min_{\lambda\in \mathbb{R}} \min_{X_1, \dots, X_n \in [0,1]} \sum_{i=1}^n X_i, \quad \textrm{s.t.} ~ \lambda \sum_{i=1}^n (X_i-m) - n\sigma^2\lambda^2 \geq \log\frac{1}{\delta}~.
    \]
    The constraint is $\sum_{i=1}^n X_i \geq nm+\log\frac1\delta/\lambda + n\sigma^2\lambda$. Minimizing the RHS over $\lambda$, the solution is $\lambda = \sqrt{\log\frac1\delta/(n\sigma^2)}$.


    Now, consider Algorithm~\ref{alg:star_bern_test}.
    By the same argument, $\ell_{\smallours}^B$ at time $t$ is minimizing $\sum_{i=t}^n X_i$ under the constraint that the null hypothesis is rejected. Consequently,
    the required $\sum_{i=1}^n X_i$ for Algorithm~\ref{alg:star_bern_test} is initially the same as for Algorithm~\ref{alg:bern_test}, but it decreases whenever $\ell^B_\smallours$ changes with respect to the previous iteration.
\end{proof}

\begin{minipage}{.48\linewidth}
\begin{algorithm}[H]
\caption{Bernstein testing}\label{alg:bern_test}
\begin{algorithmic}
\Require i.i.d.  $X_1, \dots, X_n \in [0,1]$ 
\Require $\delta > 0$, $m \in [0,1]$, $n$, $\sigma^2=\V[X]$
\State $W \gets 1$
\For{$t = 1 \ldots n$}
    \State $\ell^B \gets \sqrt{\left(\log\frac1{\delta}\right)/(n\sigma^2)}$
    \State $ W \gets W\exp\left(\ell^B \cdot (X_t - m) - (\sigma\ell^B)^2\right)$
\EndFor
\If{$W \geq \frac1\delta$}
\State Reject $H_0(m): m = \E[X]$
\EndIf
\end{algorithmic}
\end{algorithm}
\end{minipage}%
\hspace{.5cm}
\begin{minipage}{.48\linewidth}
\begin{algorithm}[H]
\caption{\ours - Bernstein testing}\label{alg:star_bern_test}
\begin{algorithmic}
\Require i.i.d.  $X_1, \dots, X_n \in [0,1]$ 
\Require $\delta > 0$, $m \in [0,1]$, $n$, $\sigma^2=\V[X]$
\State $ W\gets 1$
\For{$t = 1 \ldots n$}
    \State $\ell^B_\smallours \gets \sqrt{\left(\log\frac1{\vv{W}\delta}\right)_+/((n\vv{-t+1}) \sigma^2)}$
    \State $ W \gets W\exp\left(\ell^B_\smallours \cdot (X_t - m) - (\sigma\ell^B_\smallours)^2\right)$
\EndFor
\If{$W \geq \frac1\delta$}
\State Reject $H_0(m): m = \E[X]$
\EndIf
\end{algorithmic}
\end{algorithm}
\end{minipage}%

\section{Experiments}\label{app:exps}
We provide CDF plots as we believe they contain most information about the behavior of the confidence interval. We repeat the description from the main paper.

In short, the more the curve is to the right, the better it is.

Every algorithm provides a lower bound on the mean parameter of the corresponding distribution. We repeat the experiment $1000$ times and for every algorithm plot the empirical CDF of the produced lower bounds. I.e., a curve passing through point $(x, y)$ should be understood as $y-$fractions of the lower bounds are smaller than $x$.
We include a vertical and a horizontal magenta line representing the mean (vertical) and $1-\delta$ (horizontal) as the desired coverage. The  eCDF of the algorithm passes the vertical line at point $(\mu, \delta')$, where $\mu$ is the true mean and $1-\delta'$ is the empirical coverage. We have zoomed in to a box centered at $(\mu, 1-\delta)$ to see what is the coverage; also, we have added black vertical lines corresponding to $0.95-$one-sided confidence intervals. If the eCDF meets the vertical magenta line above (resp. below) the black line, it has coverage smaller (resp. bigger) than $1-\delta$ at confidence level 0.95. All algorithms apart from T-test have guaranteed coverage at least $1-\delta$, so if they occur under the black line, it is by a chance.

\subsection{Experimental results}
Here, we provide general experimental results.
We always show STaR bet from Algorithm~\ref{alg:our_algo} with details from~\ref{app:implementation}. Hedged-CI is from~\cite{Waudby-SmithR21} with the default settings. T-test and (randomized) Clopper-Pearson intervals are standard.

\paragraph{Bernoulli:}
Here, we can see that \ours \ is performing very closely to randomized Clopper-Pearson and outperforming standard Clopper-Pearson on small sample sizes. On the larder sample sizes, the performance of all three algorithms become very similar, significantly outperforming Hedge-CI.

\paragraph{Beta:}
Here, we compete with T-test based confidence interval with no formal guarantees. We can see that as long as $a > b$ ($a,b$ are parameters of the beta distributions), T-test is outperforming \ours; however, it clearly has larger coverage than $1-\delta$, violating the principles of confidence intervals. When $b > a$, \ours \ usually provides shorter intervals than T-test. Hedged-CI provides much larger ones. With increasing $n$, the performance of T-test and of \ours \ become essentially identical.

\paragraph{Coverage:} In the majority of cases, the coverage of $\ours$ is statistically indistinguishable from $1-\delta$. Coverage of Hedged-CI is usually $1$.

\begin{landscape}
    
\includegraphics[width=\linewidth]{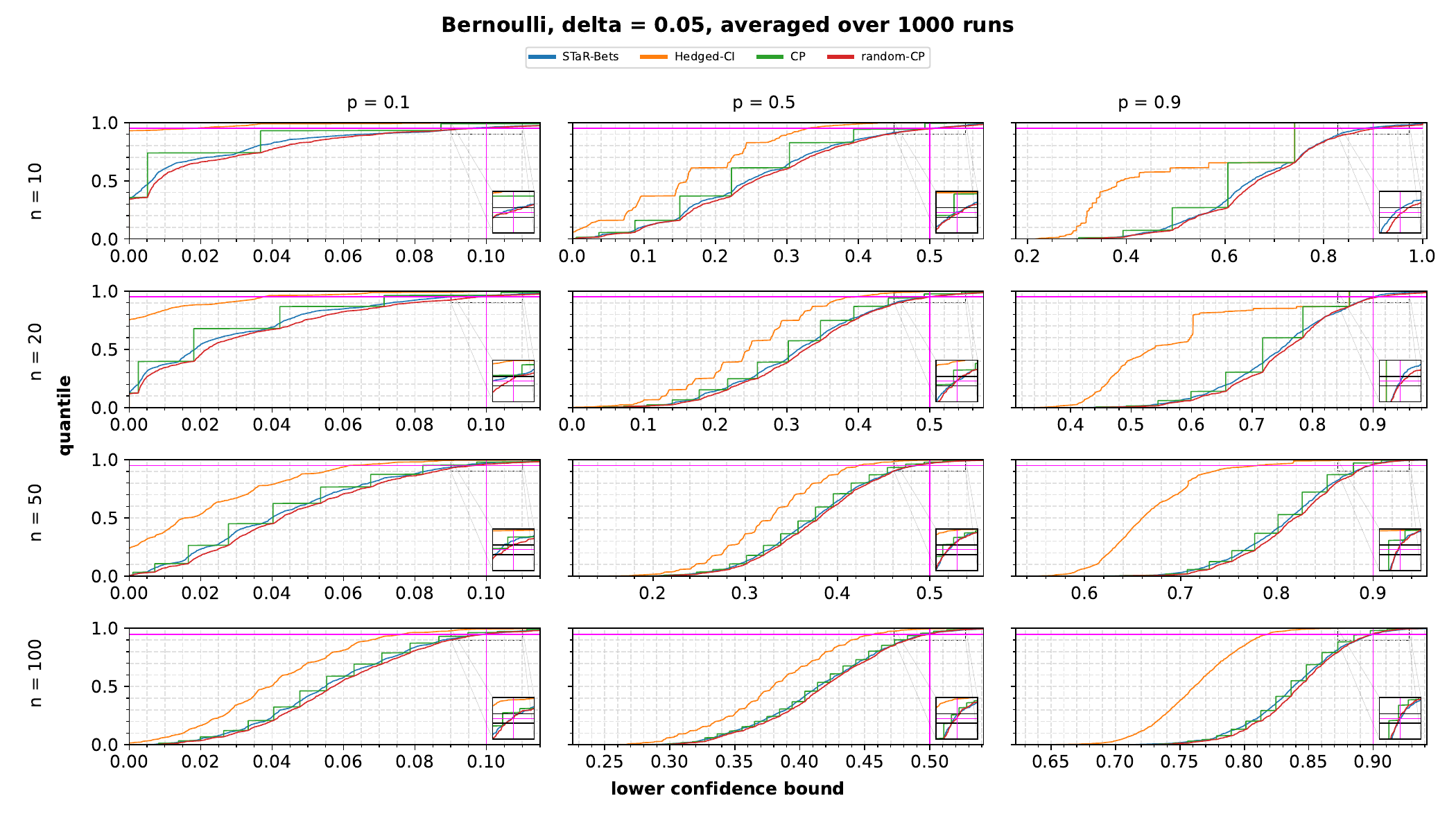}

\includegraphics[width=\linewidth]{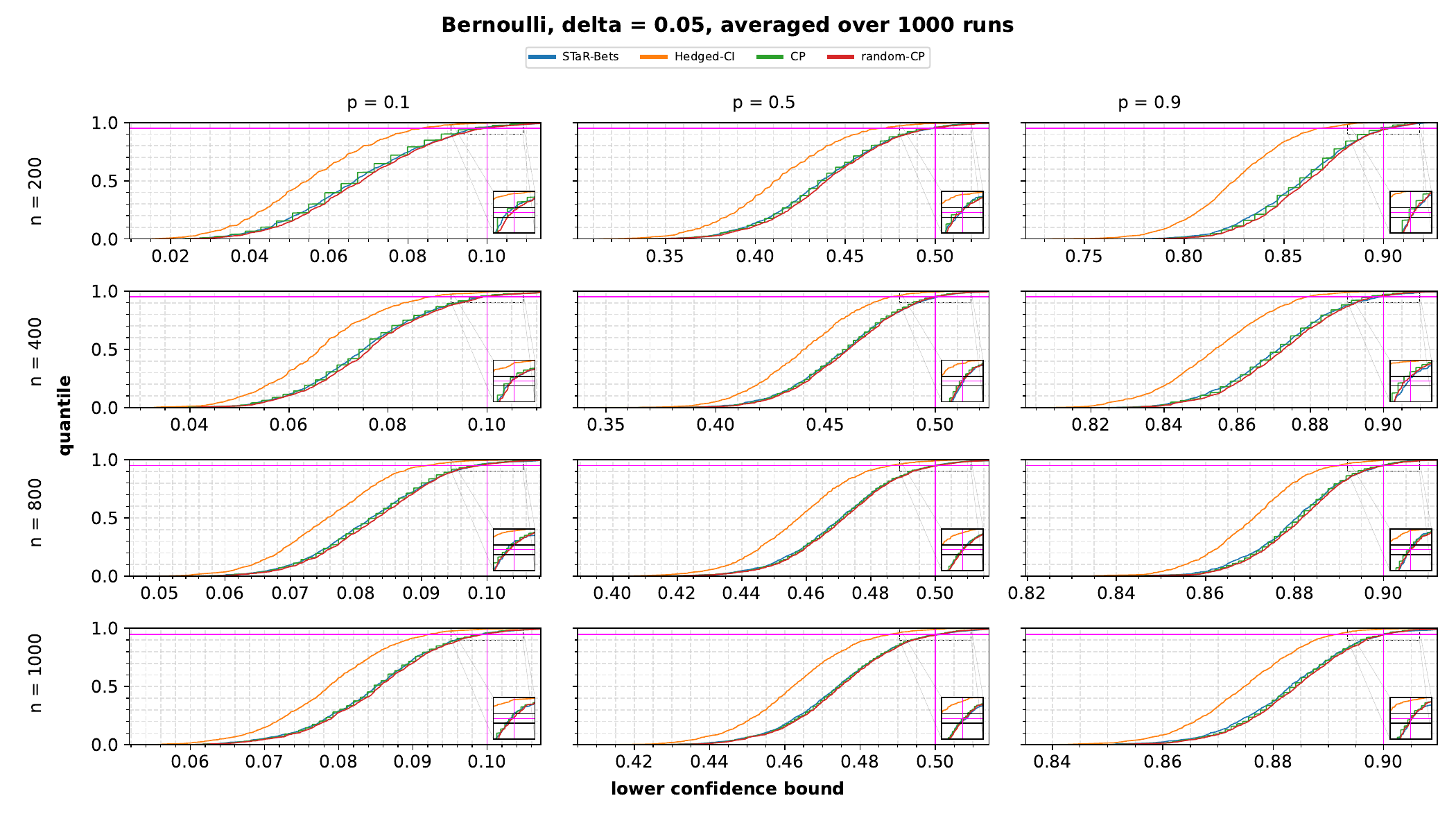}

\includegraphics[width=\linewidth]{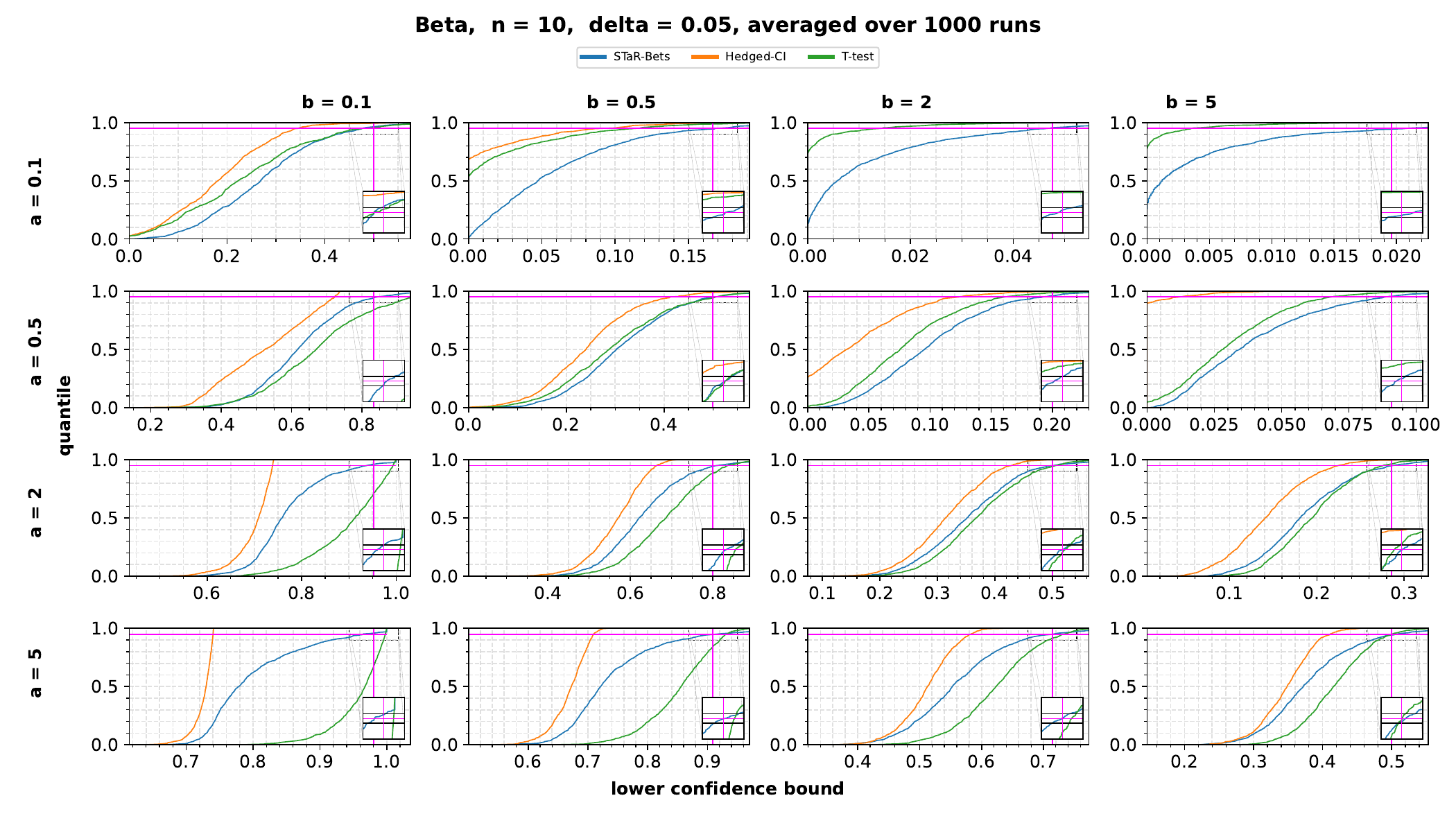}

\includegraphics[width=\linewidth]{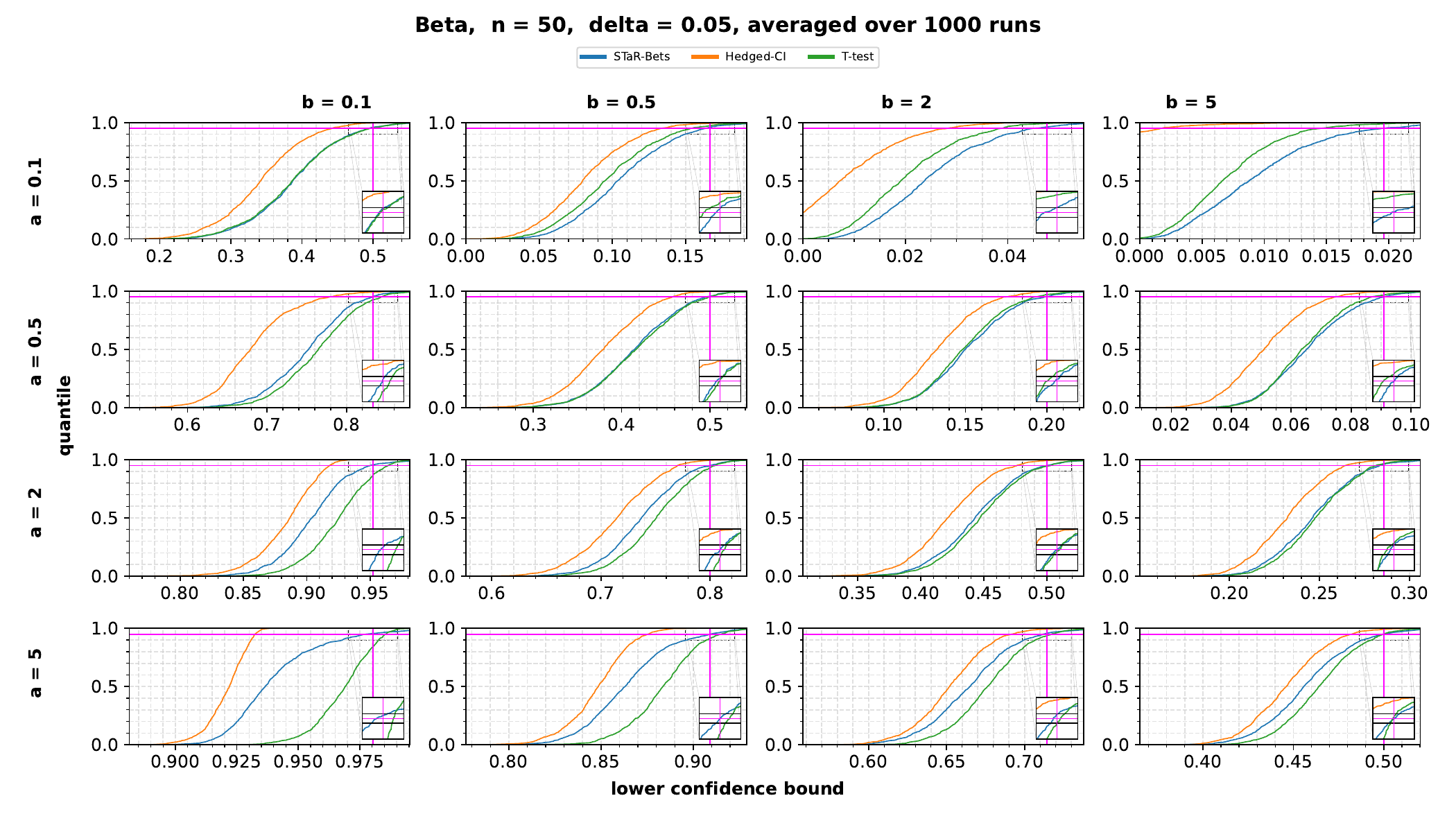}

\includegraphics[width=\linewidth]{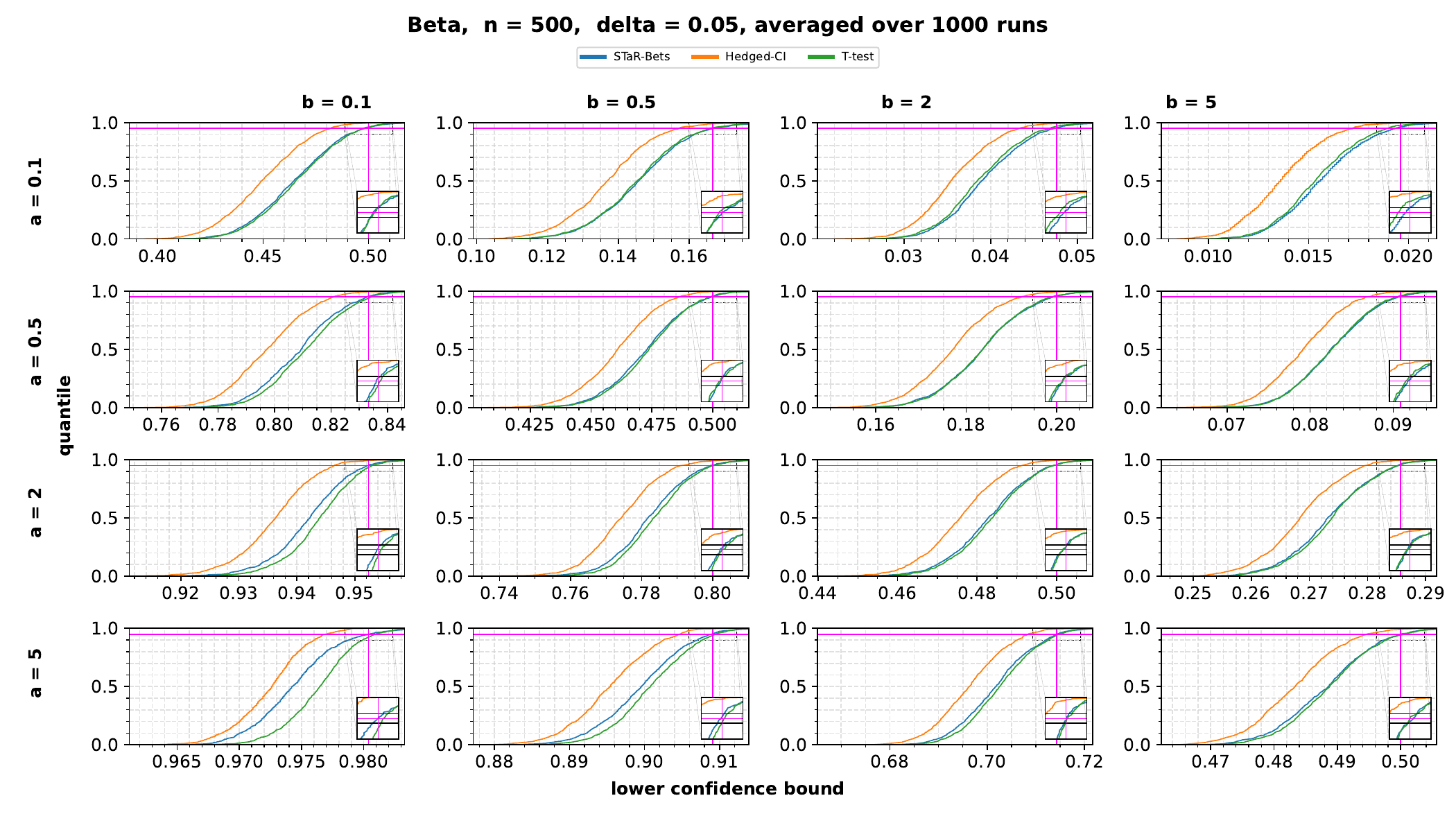}
\end{landscape}

\subsection{Influence of $\delta$}
The majority of the experiments are made with $\delta=0.05$; here, we present the subset of the experiments with different values of $\delta$. The results follow the patterns observed with $\delta=0.05$.

\begin{landscape}
    
\includegraphics[width=\linewidth]{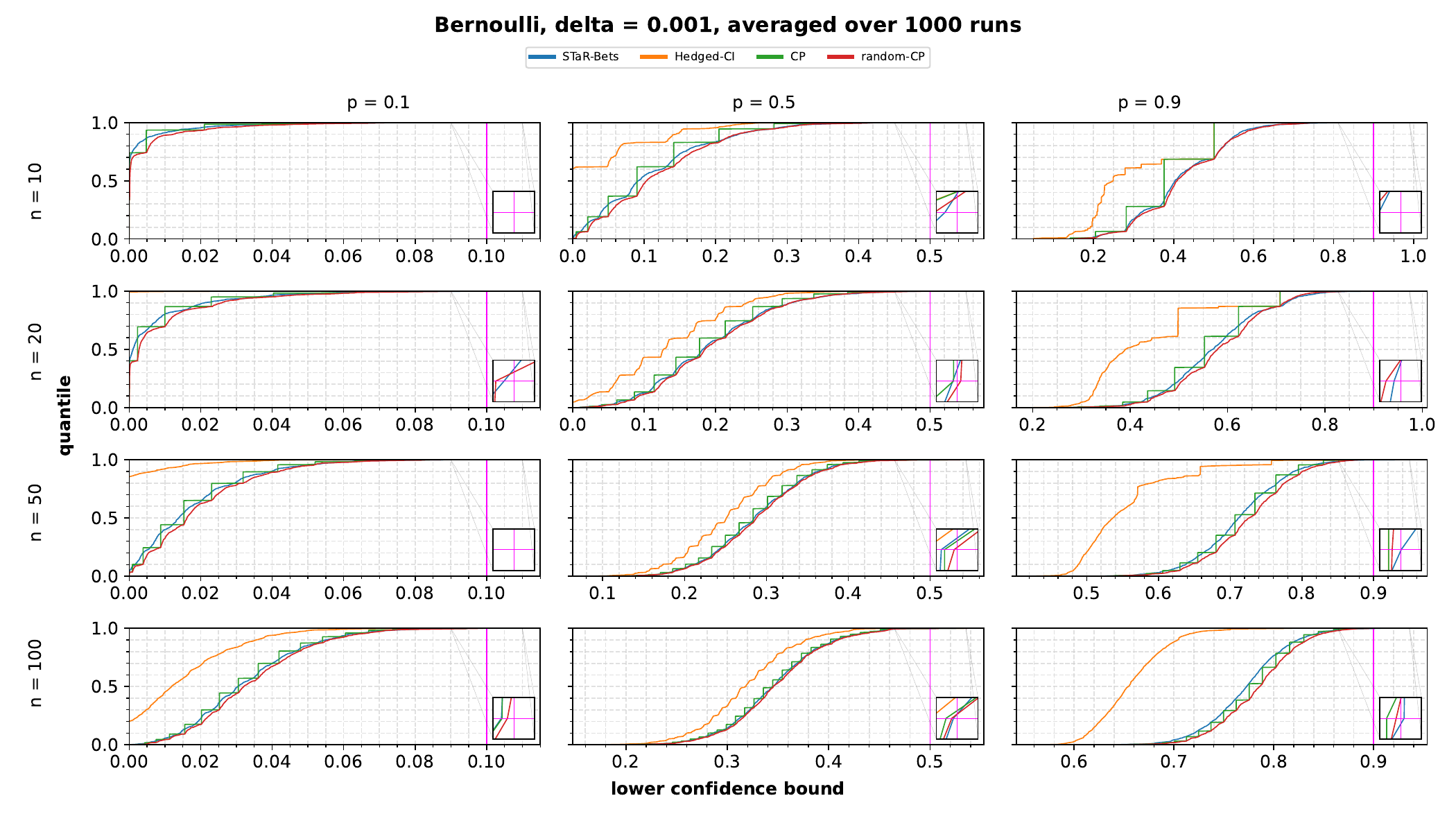}

\includegraphics[width=\linewidth]{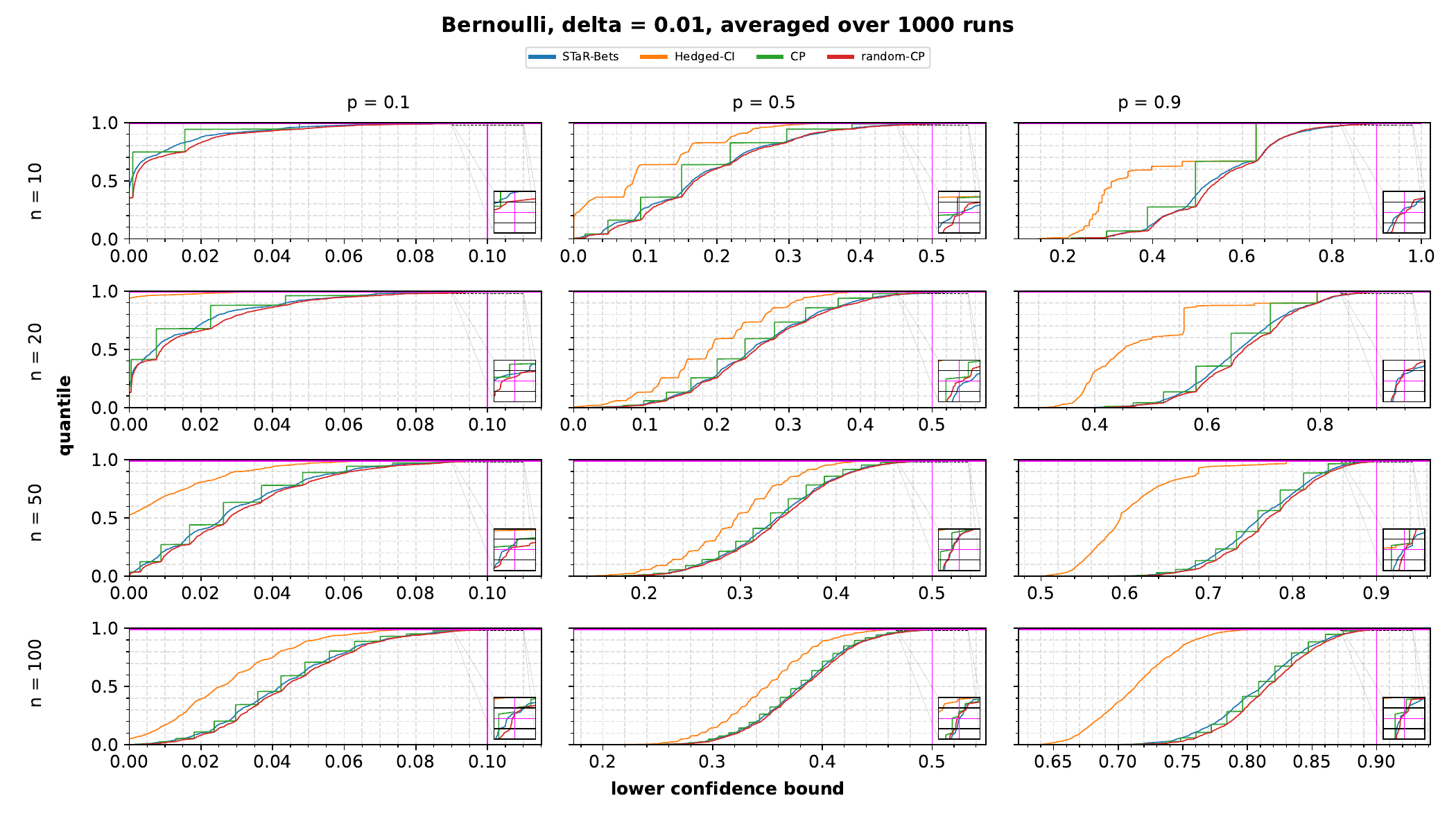}

\includegraphics[width=\linewidth]{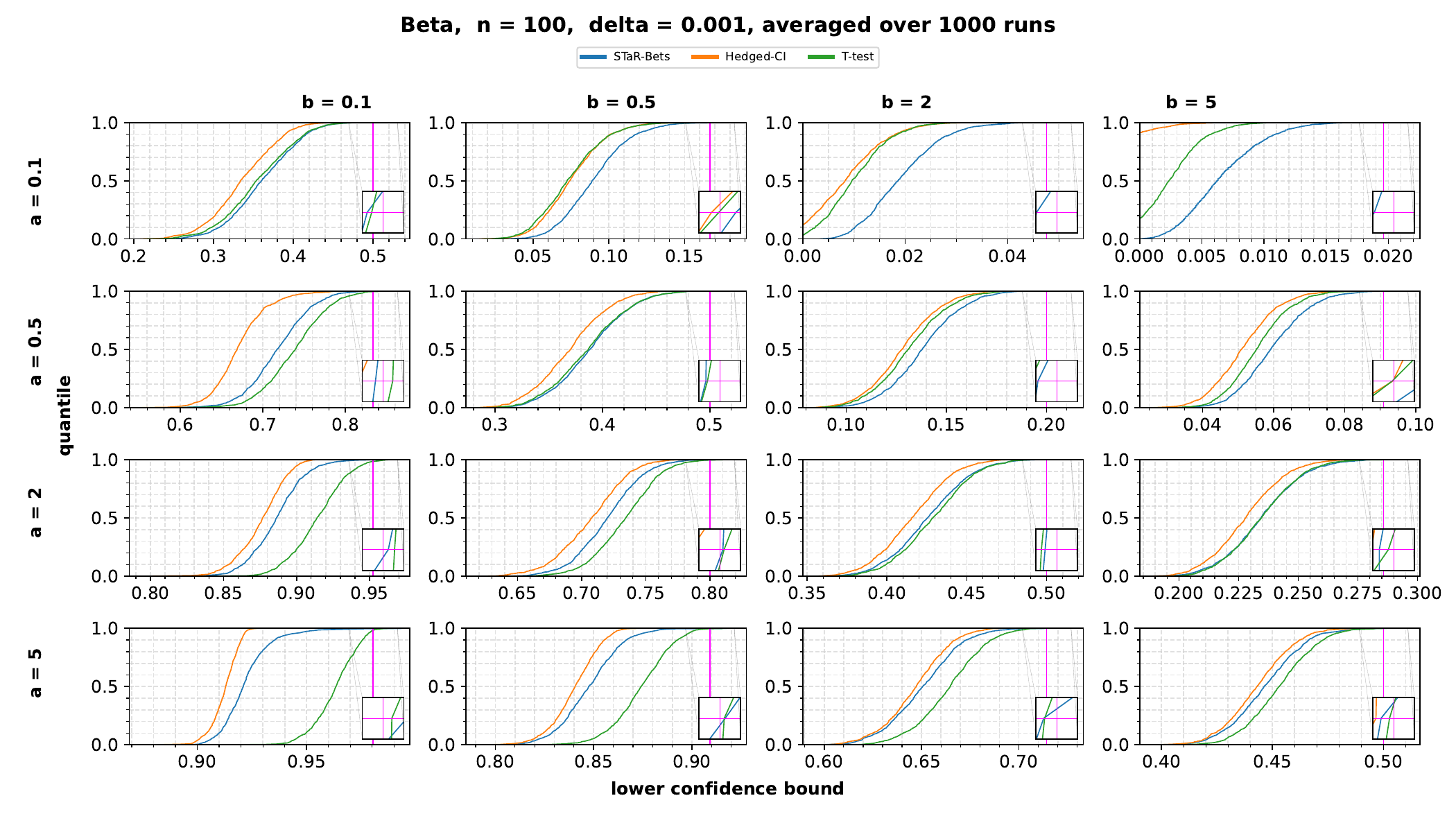}

\includegraphics[width=\linewidth]{beta_delta1.pdf}

\end{landscape}

\subsection{Selection of c}\label{app:exps:c}
As discussed in Appendix~\ref{app:impl:estimator}, we estimate the expectation $\E[(X-m)^2]$ using the empirical mean with an additive $cmn/(t-1)^2$ term, where $t$ is the current round, $n$ is the number of rounds and $c$ is a free parameter. We provide experiments suggesting that the algorithm is not so sensitive about the choice, especially as $n$ increases, and we choose $c=1$ as a natural choice to not overfit on our benchmark distributions.

In the following experiments, we used Algorithm~\ref{alg:our_algo} with the details in Appendix~\ref{app:implementation} sweeping over exponentially spaced grid of values of $c$.

\begin{landscape}
    
\includegraphics[width=\linewidth]{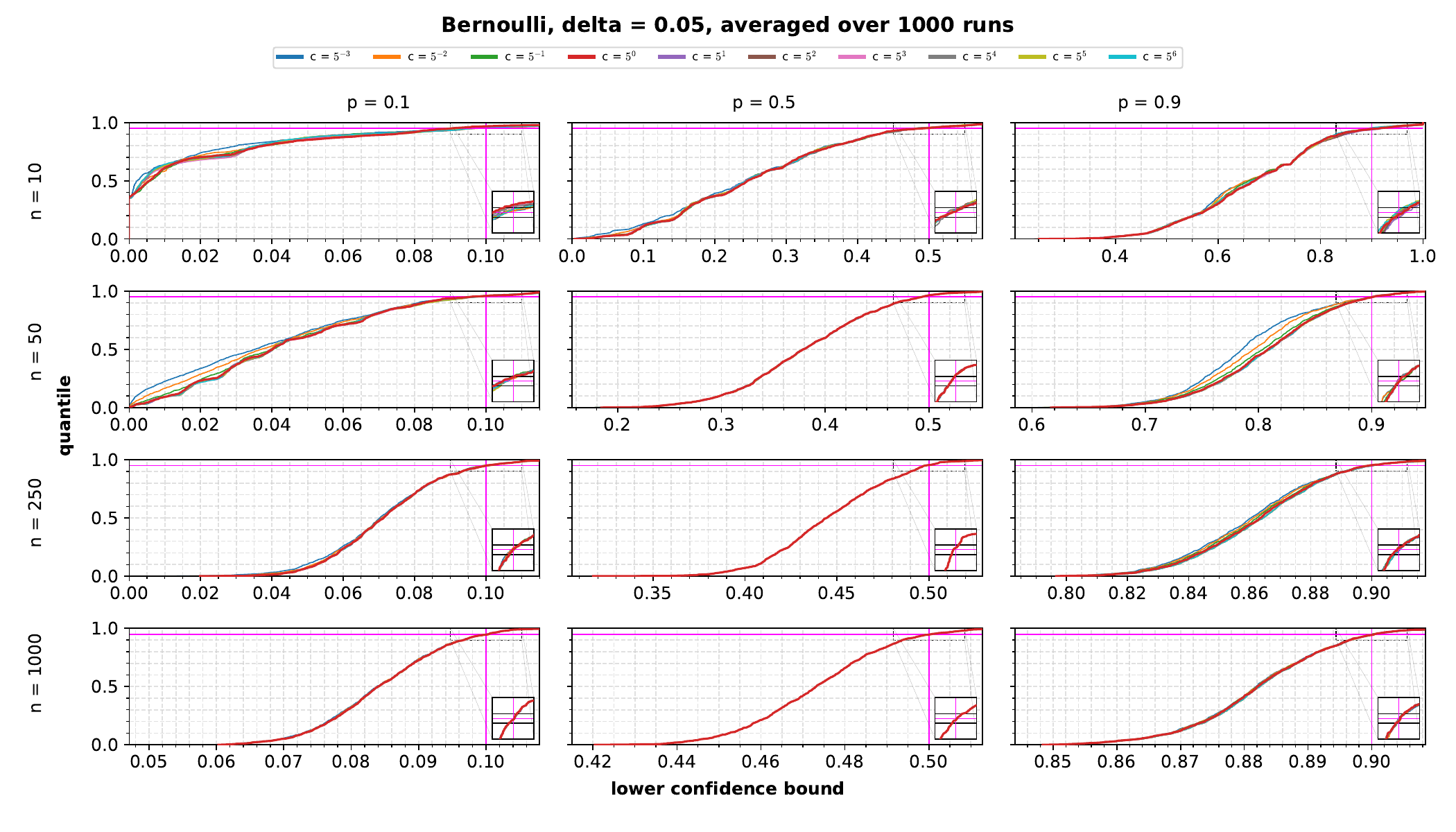}

\includegraphics[width=\linewidth]{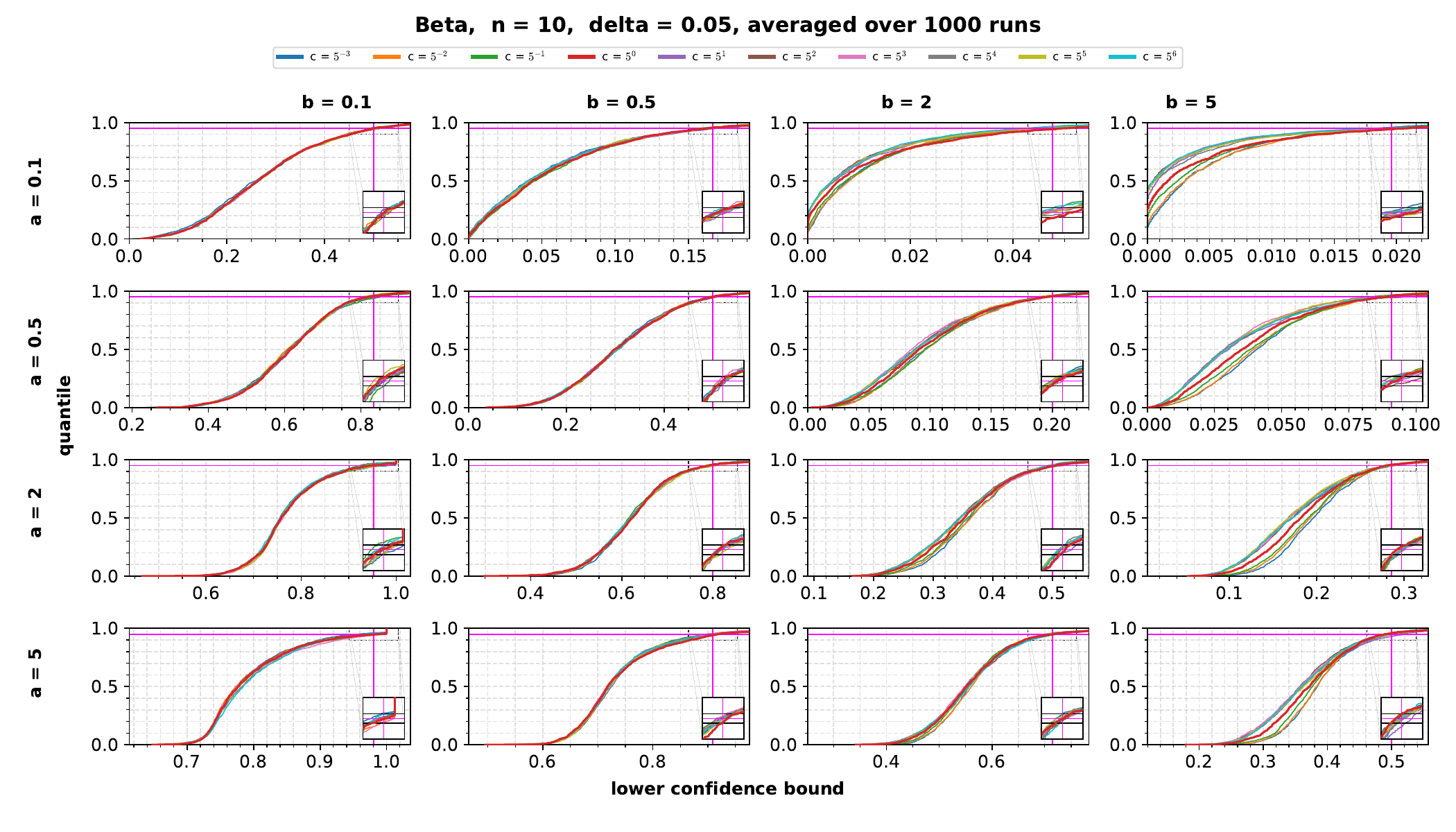}

\includegraphics[width=\linewidth]{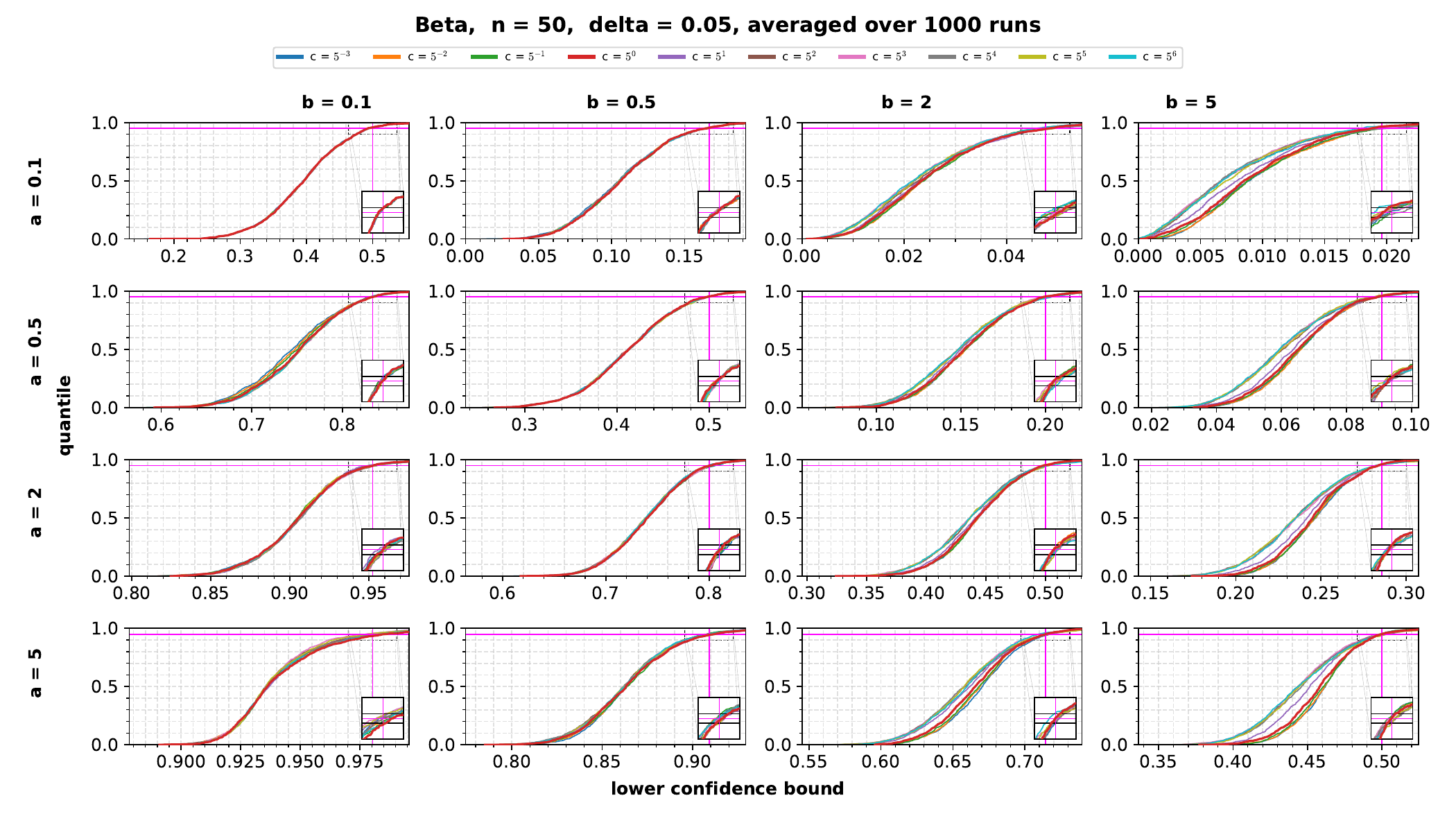}
\end{landscape}

\subsection{Last round bet}\label{app:exps:last_round}

Here we provide experiments quantifying the effect of (an additional) last round betting described in Appendix~\ref{app:impl:randomization}.

\begin{itemize}
    \item STaR-Bets is Algorithm~\ref{alg:our_algo} with detail from~\ref{app:implementation}. 
    \item Bets is STaR-Bets without the \ours-component.
    \item (STaR) bets w/o last bet are the first two algorithms without the last round bet (described in Appendix~\ref{app:impl:randomization}).
    \item Hedged-CI is the confidence interval of ~\cite{Waudby-SmithR21}.
\end{itemize}
We can see that Hedged-CI performs similarly to Bets without last bet, and STaR bets without the last bet is significantly stronger.  Adding last bet help both algorithms, but the effect is less significant on STaR, since by design, it tries to end up with $0$ or $\frac 1 \delta$ money. Sometimes the performance of Hedged-CI and Bets without last bet (and also of the pair STaR with/without last bet) are so similar, that the curves are indistinguishable.

\begin{landscape}    \includegraphics[width=\linewidth]{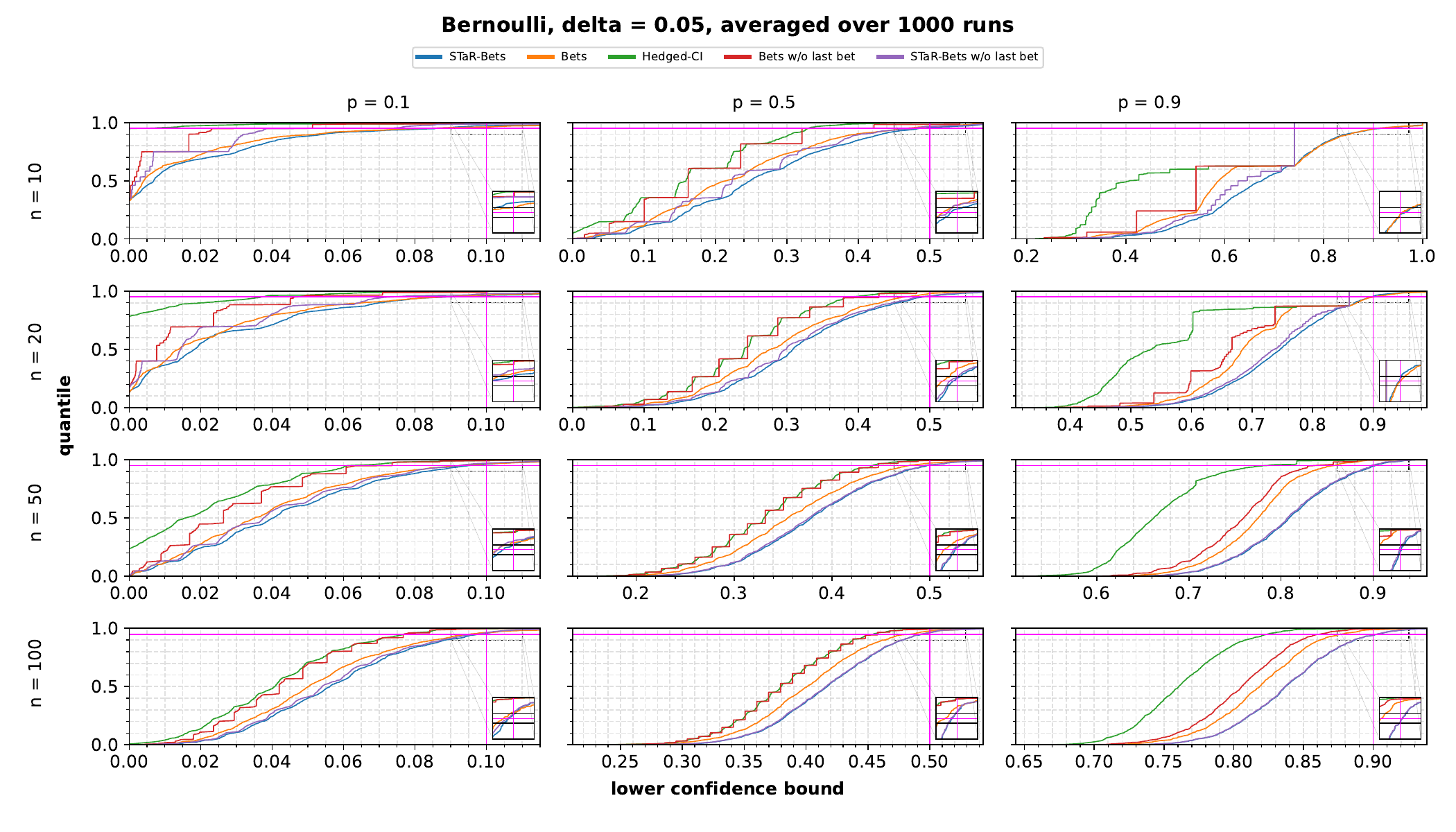}

\includegraphics[width=\linewidth]{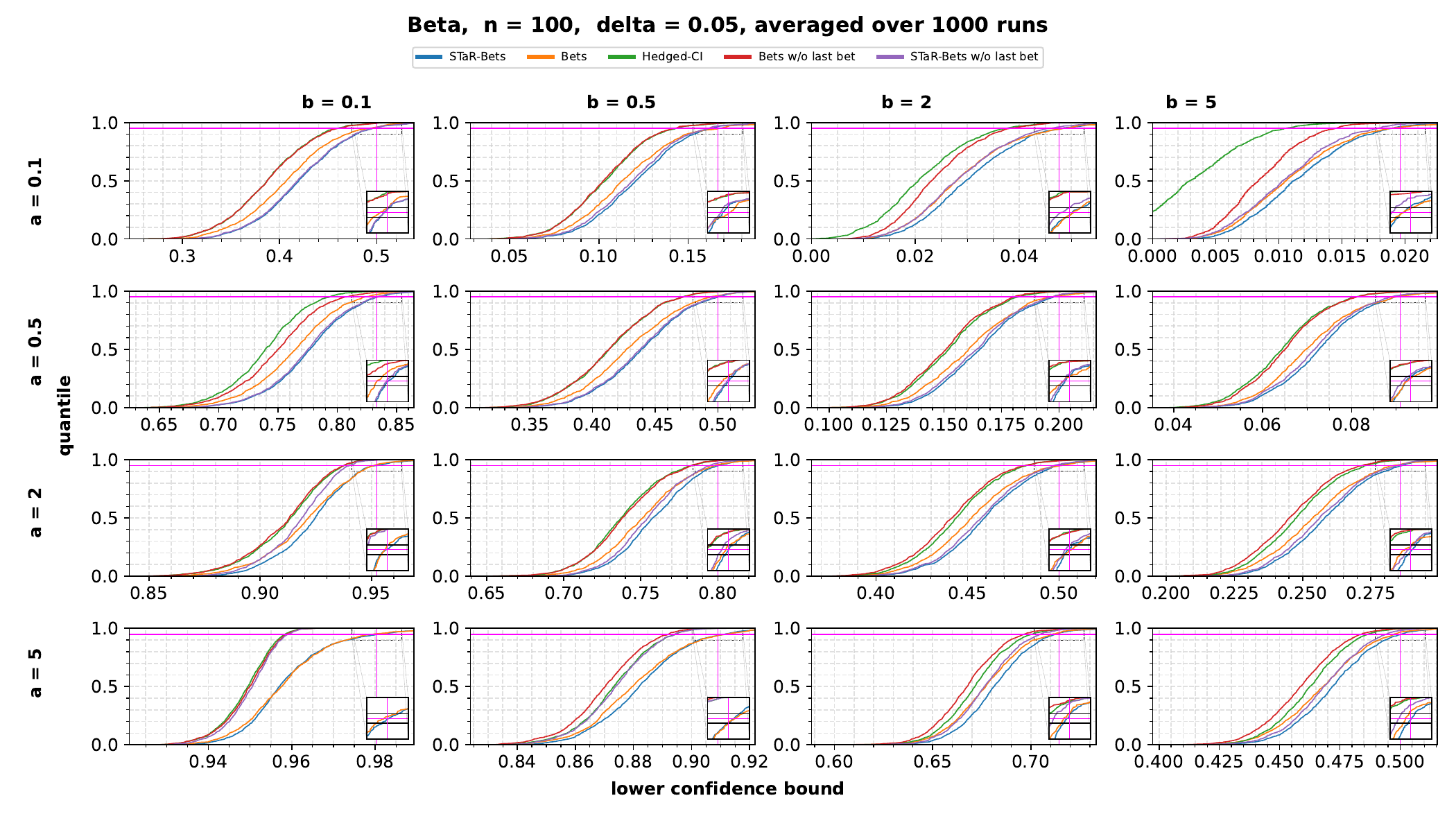}

\includegraphics[width=\linewidth]{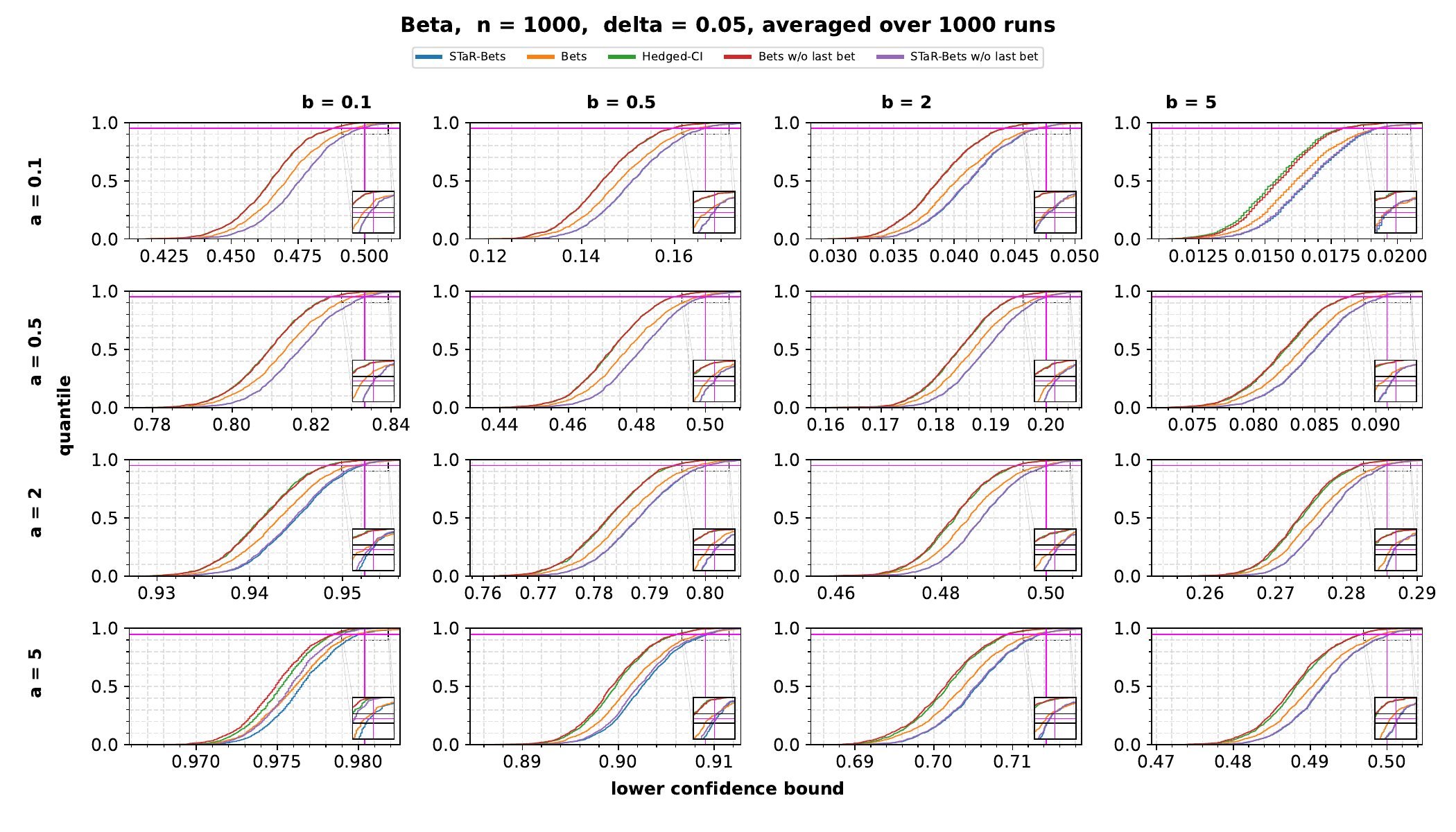}

\end{landscape}

\subsection{Variance of the confidence interval with data shuffling}
While the majority of confidence intervals are independent of the data order, it is not the case for the betting based one. Here, we present several experiments where for every setting, the corresponding sample of random variables was obtained and then it was only shuffled for all  the $1000$ experiments. Note that here the coverage is meaningless, as we do not draw fresh samples.

\begin{landscape}    
\includegraphics[width=\linewidth]{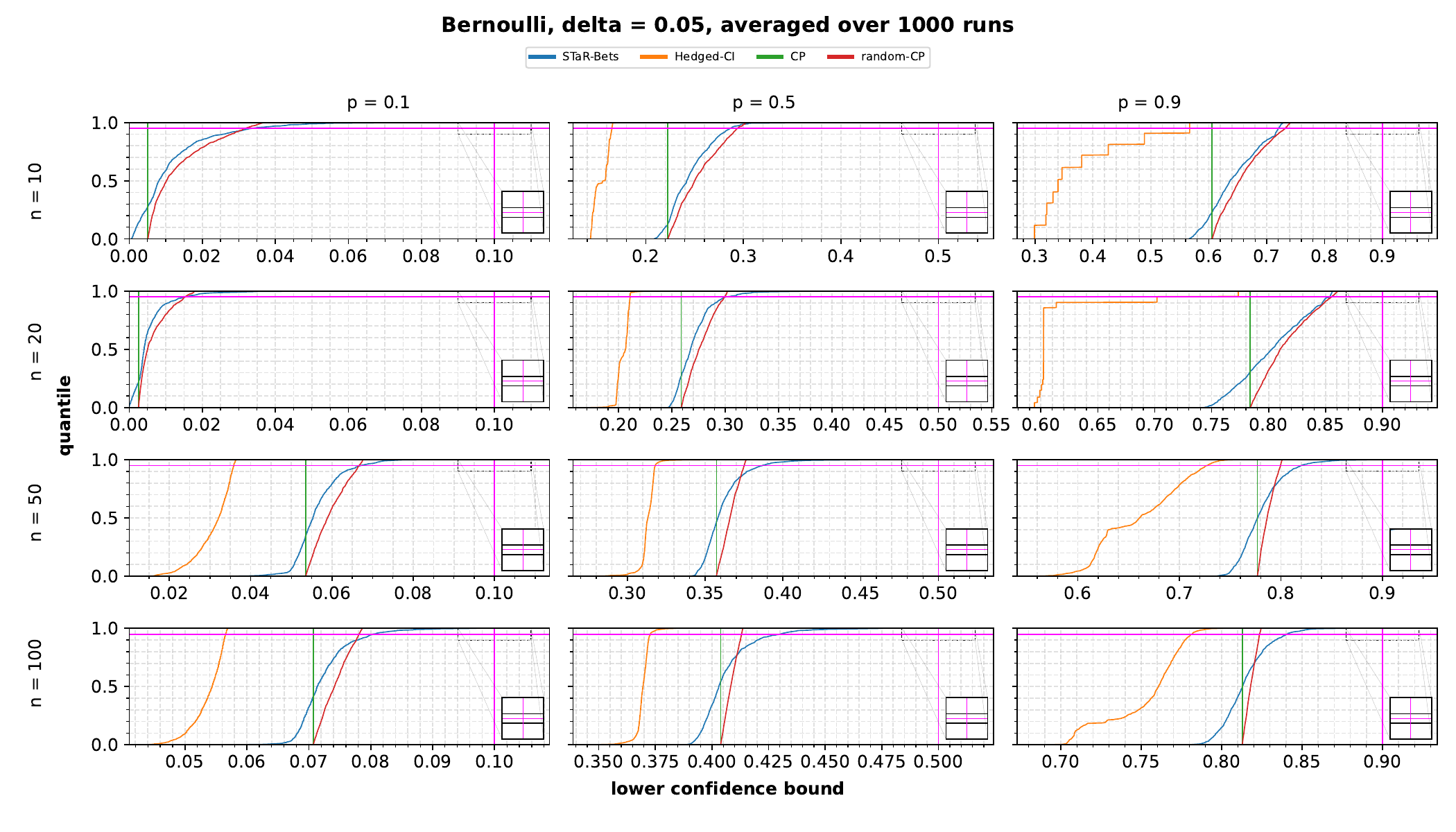}

\includegraphics[width=\linewidth]{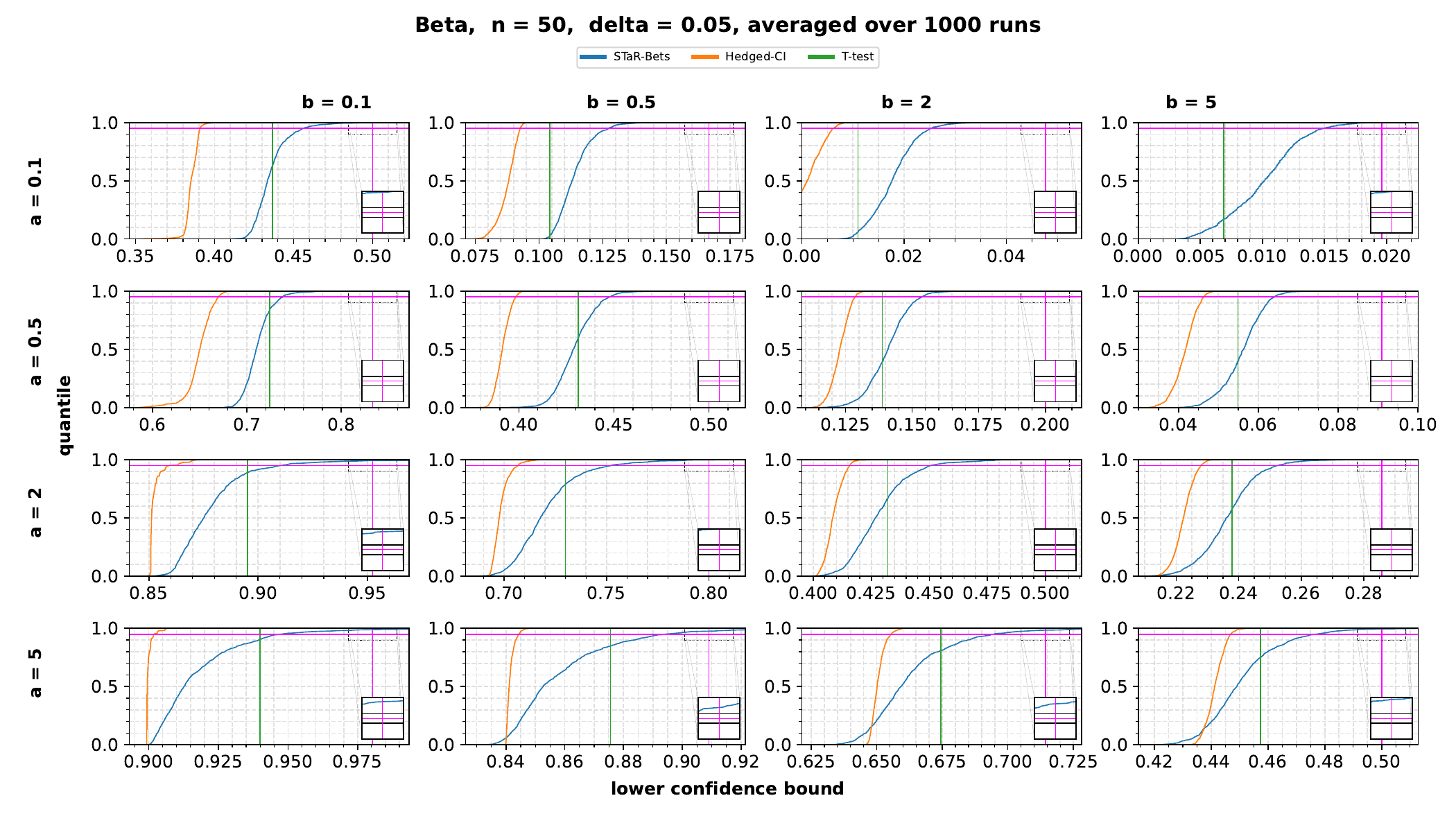}
\end{landscape}

\section{Implementation details}\label{app:implementation}
There are certain aspects where the algorithm we implemented deviates from the vanilla version in Algorithm~\ref{alg:our_algo_vanilla}, but the underlying process is still a Test process, so the resulting confidence intervals are valid. 
Here, we describe these details.

\subsection{Clipping}
     Instead of clipping the estimate of $\E[(X-m)^2]$ to its upper bound $\max\{m^2, (1-m)^2\}$, we clip it to $m(1-m)$ instead. It would be the maximal value of $\E[(X-m)^2)$  if $m = \E[X]$. The testing problem is hard when $m \approx \E[X]$ (and easy otherwise), so this way, we help the algorithm on the hard values of $m$ and hurt it on the easy ones, yielding better empirical performance.
\subsection{Last round randomization}\label{app:impl:randomization}
 Just as the conservativeness of Clopper-Pearson is fixed by randomization, we attempt similar thing in the betting games.\footnote{The resulting random variable is known as all-or-nothing random variable, see \cite{ramdas2024hypothesis}.}
    After finishing the testing procedure, we end up with wealth W. Then, we draw a uniform random variable $U$ supported in $[0,1]$; if $W \geq U/\delta$, we set it to $\frac{1}{\delta}$. Otherwise, we set it to $0$. It is easy to see that after this manipulation, $W$ is still a test-variable, since it is still non-negative and its expectation did not increase. The effect of this is usually small for \ours-Bets, since it is encouraged by design to end up with $1/\delta$ or  no wealth. The exceptions are instances with small $n$, small variance, or instances with $\E[X] \sim 1$, in which cases the bets has to be very small, and so it is not easy to properly adapt the betting strategy and ensure that we bet aggressively enough if needed. See Appendix~\ref{app:exps:last_round} for experiments.

\subsection{Choice of second moment estimator}\label{app:impl:estimator}
Algorithms~\ref{alg:our_algo_vanilla},~\ref{alg:our_algo} have a hyperparameter $\alpha$ corresponding to the lower bound of probability of having short intervals in the proofs. It influences how conservative we should be in estimating $\E[(X-m)^2]$. We have (empirically)  observed that the larger $m$ is, the more conservative we should be. The intuition is that our win (or loss) $\ell(X-m)$ can be as low as $-\ell m$, and so with larger $m$, we should be more careful about the choice of $\ell$. We instantiate the $10\log\frac8\alpha n/(t-1)^2$ term as $cmn/(t-1)^2$ with $c=1$. In Appendix~\ref{app:exps:c} we provide experiments suggesting that the exact choice of $c$ is not that crucial, but a proper argument about how should $c$ depend on $m$ is not given.



\end{document}